\newcommand{\partitle}[1]{\smallskip \noindent \textbf{#1.}}
\newtheorem{theorem}{Theorem}
\newtheorem{definition}{Definition}
\newsavebox\myboxA
\newsavebox\myboxB
\newlength\mylenA
\newcommand*\xbar[2][0.75]{%
    \sbox{\myboxA}{$\m@th#2$}%
    \setbox\myboxB\null
    \ht\myboxB=\ht\myboxA%
    \dp\myboxB=\dp\myboxA%
    \wd\myboxB=#1\wd\myboxA
    \sbox\myboxB{$\m@th\overline{\copy\myboxB}$}
    \setlength\mylenA{\the\wd\myboxA}
    \addtolength\mylenA{-\the\wd\myboxB}%
    \ifdim\wd\myboxB<\wd\myboxA%
       \rlap{\hskip 0.5\mylenA\usebox\myboxB}{\usebox\myboxA}%
    \else
        \hskip -0.5\mylenA\rlap{\usebox\myboxA}{\hskip 0.5\mylenA\usebox\myboxB}%
    \fi}
\newcommand{\tens}[1]{\boldsymbol{\mathscr{#1}}}
\newcommand{\mat}[1]{\ensuremath{\mathbf{#1}}}
\newcommand{\argmin}{\mathop{\rm argmin}}
\newcommand{\expect}{\mathbb{E}}
\newcommand{\tX}{\tens{X}}
\renewcommand{\S}{\mat{S}}
\newcommand{\I}{\mat{I}}
\newcommand{\U}{\mat{U}}
\newcommand{\V}{\mat{V}}
\newcommand{\X}{\mat{X}}
\newcommand{\R}{\mat{R}}
\newcommand{\G}{\mat{G}}
\renewcommand{\S}{\mat{S}}
\renewcommand{\O}{\mat{O}}
\newcommand{\Q}{\mat{Q}}
\renewcommand{\H}{\mat{H}}
\title{MULTIPAR: Supervised Irregular Tensor Factorization with Multi-task Learning
}
\author{
  Yifei Ren \\
  Microsoft \\
  Redmond, USA\\
  \texttt{yifeiren@microsoft.com} \\
   \And
  Jian Lou \\
  Xidian University \\
  Shenzhen, CN\\
  \texttt{jlou@xidian.edu.cn} \\
     \And
Li Xiong \\
  Emory University \\
  Atlanta, USA\\
  \texttt{lxiong@emory.edu} \\
     \And
 Joyce C Ho \\
  Emory University \\
  Atlanta, USA\\
  \texttt{joyce.c.ho@emory.edu} \\
     \And
  Xiaoqian Jiang \\
  Health Science Center of University of Texas\\
  Houston, USA\\
  \texttt{Xiaoqian.Jiang@uth.tmc.edu} \\
  \And
  Sivasubramanium Bhavan \\
  Emory School of Medicine\\
  Atlanta, USA\\
  \texttt{sivasubramanium.bhavani@emory.edu} \\
}
\begin{document}
\maketitle

\begin{abstract}
Tensor factorization has received increasing interest due to its intrinsic ability to capture latent factors in multi-dimensional data with many applications such as recommender systems and Electronic Health Records (EHR) mining.  PARAFAC2 and its variants have been proposed to address irregular tensors where one of the tensor modes is not aligned, e.g., different users in recommender systems or patients in EHRs may have different length of records. PARAFAC2 has been successfully applied on EHRs for extracting meaningful medical concepts (phenotypes). Despite recent advancements, current models' predictability and interpretability are not satisfactory, which limits its utility for downstream analysis. In this paper, we propose MULTIPAR: a supervised irregular tensor factorization with multi-task learning. MULTIPAR  is flexible to incorporate both  static (e.g. in-hospital mortality prediction) and continuous or dynamic (e.g. the  need for ventilation) tasks. By supervising the tensor factorization with downstream prediction tasks and leveraging  information from  multiple  related predictive  tasks, MULTIPAR can yield not only more meaningful phenotypes but also better predictive performance for downstream tasks. We conduct extensive experiments on two real-world temporal EHR datasets to demonstrate that MULTIPAR is scalable and achieves better tensor fit with more meaningful subgroups and stronger predictive performance compared to existing state-of-the-art methods. 
\end{abstract}

\keywords{tensor factorization \and electronic health records \and PARAFAC2 \and multi-task learning}

\section{Introduction}
Tensor factorization has received increasing interest due to its intrinsic ability to capture the multi-dimensional structure in the data. It has a wide range of applications including social network analysis \cite{MetaFac2009,link2011}, health data mining \cite{rubik2015,GigaTensor,CoSTCo,taste2019,repair2020,logpar2020,afshar2018copa}, recommender systems \cite{recommender2010}, and signal processing \cite{signal2016}. Canonical Polyadic (CP) \cite{Carroll1970AnalysisOI,Harshman1970FoundationsOT}
, Tucker \cite{tucker}, and tensor singular value decomposition (SVD) \cite{thirdKilmer,factorization2011} are popular regular tensor factorization methods, where each mode of the tensor has fixed size. However, in real-world cases, different people in recommender systems or patients in health data  may have different length of records, which can not be handled by regular tensor factorization methods. PARAFAC2 \cite{Harshman1972Parafac2MA,PARAFAC2part1} has been proposed for factorizing irregular tensors, where one of the mode size is not fixed. We introduce our motivating application for Electronic Health Records (EHRs) mining below for irregular tensor factorization. 

EHRs are patient-centered records collected from a variety of institutions, hospitals, pharmaceutical companies across a long period of time. 
EHR mining can significantly improve the ability to diagnose diseases and reduce or even prevent medical errors, thereby improving patient outcomes \cite{miningyadav2017}. However, directly using the raw, massive, high-dimensional, and longitudinal EHRs is challenging. For example, a single disease can consist of several heterogeneous subgroups yet be coded with the same diagnosis category, e.g., hypertension can be divided into hypertension resolver, hypertension, and prehypertension subgroups. Thus, researchers and healthcare practitioners seek to identify \emph{phenotypes}, or disease subgroups, to better understand differences in biological mechanisms and treatment responses, which can lead to more effective and precise treatment.

Tensor factorization has been widely used to capture the multi-dimensional  structure in EHRs for computational phenotyping \cite{marble2014,limestone2014,rubik2015,GigaTensor,CoSTCo,taste2019,repair2020,logpar2020,afshar2018copa}. Compared to traditional clustering-based approaches, tensor factorization can mine concise and potentially more interpretable latent information between multiple attributes (e.g., diagnosis and medications) in addition to clustering patients into subgroups. 
One key characteristic of EHR data is that different patients  have different visit lengths, varying disease states, and varying time gaps between consecutive visits. Hence PARAFAC2 has been applied to extract pheonotypes from EHR data. 

{\em EHR Example.} 
Table \ref{tbl:raw} shows an EHR database with a set of patient records. Each record consists of a sequence of visits.  Each visit contains measurements for a set of medical features (e.g., laboratory results, medications, etc.). Circles denote missing observations. The number of visits can be different across patients. 

\begin{table}
\centering
\scriptsize
\resizebox{0.5\textwidth}{!}{%
\begin{tabular}{|>{\columncolor[gray]{0.8}} p{10mm}  p{8mm} p{12mm} p{8mm} p{12mm} p{12mm} p{8mm} p{8mm} p{2mm}} 
\toprule 
\textbf{patient 1 \newline visit time}&\textbf{Albumin}&\textbf{Blood urea nitrogen}&\textbf{Chloride}&\textbf{Glascow coma scale} & \textbf{Oxygen saturation}& \textbf{Sodium}& \textbf{......}\\ \midrule 
\color[HTML]{3531FF}0 &\color[HTML]{CB0000}1.8 & \color[HTML]{3531FF}O & \color[HTML]{3531FF}O & \color[HTML]{3531FF}O & \color[HTML]{CB0000}97.5 & \color[HTML]{3531FF}O&\color[HTML]{3531FF}......\\ 
\color[HTML]{3531FF}1& \color[HTML]{3531FF}O & \color[HTML]{3531FF}O & \color[HTML]{3531FF}O & \color[HTML]{CB0000}11 & \color[HTML]{3531FF}O & \color[HTML]{CB0000}135&\color[HTML]{3531FF}......\\ 
\color[HTML]{3531FF}2&\color[HTML]{3531FF}O & \color[HTML]{3531FF}O & \color[HTML]{CB0000}109 & \color[HTML]{3531FF}O & \color[HTML]{3531FF}O & \color[HTML]{3531FF}O&\color[HTML]{3531FF}......\\ 
\color[HTML]{3531FF}3&\color[HTML]{3531FF}O & \color[HTML]{CB0000}24 & \color[HTML]{3531FF}O & \color[HTML]{3531FF}O & \color[HTML]{CB0000}91.4 & \color[HTML]{3531FF}O &\color[HTML]{3531FF}......\\ \midrule
\textbf{patient 2 \newline visit time}&\textbf{Albumin}&\textbf{Blood urea nitrogen}&\textbf{Chloride}&\textbf{Glascow coma scale} & \textbf{Oxygen saturation}& \textbf{Sodium}& \textbf{......}\\ \midrule 
\color[HTML]{3531FF}0 &\color[HTML]{3531FF}O  & \color[HTML]{CB0000}22 & \color[HTML]{3531FF}O & \color[HTML]{3531FF}O & \color[HTML]{3531FF}O & \color[HTML]{3531FF}O&\color[HTML]{3531FF}......\\ 
\color[HTML]{3531FF}1& \color[HTML]{3531FF}O & \color[HTML]{CB0000}18 & \color[HTML]{3531FF}O & \color[HTML]{3531FF}O & \color[HTML]{CB0000}95 & \color[HTML]{3531FF}O&\color[HTML]{3531FF}......\\  \bottomrule
\end{tabular}}
\caption{Sample EHRs data (extracted from MIMIC-EXTRACT dataset, circles means missing observations). }\label{tbl:raw}
\end{table}

\begin{figure}
\centering
\includegraphics[scale=0.28]{./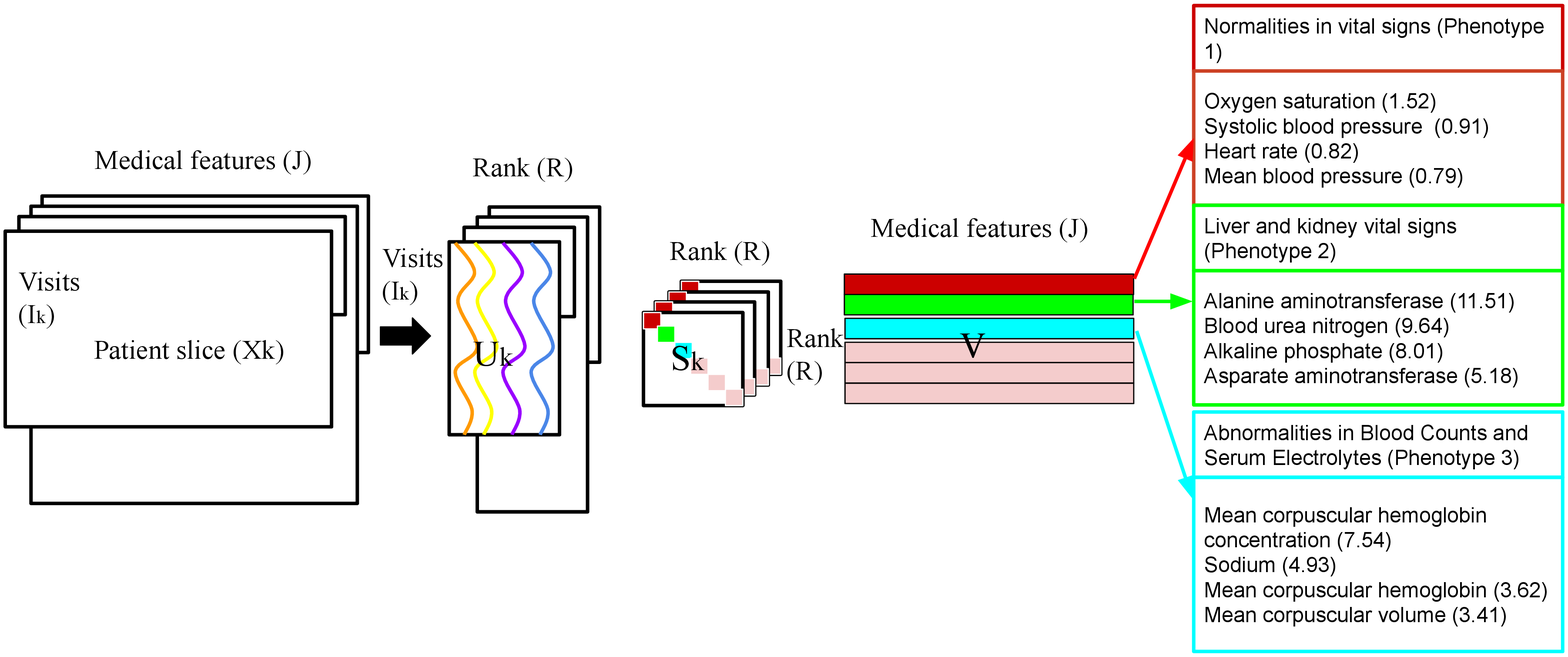}
\vspace{-0.6cm}
\caption{PARAFAC2 tensor factorization}

\label{fig_PARAFAC2}
\end{figure}

Each patient record can be captured using a binary, numeric, or count matrix $X_k$, where each matrix value represents the measurement associated with a particular feature for a particular visit. 
The entire data can be represented as an irregular tensor as shown in Figure \ref{fig_PARAFAC2} where each slice $X_k$ represents the information of patient $k$ with $I_k$ visits and $J$ medical features.
Figure \ref{fig_PARAFAC2} illustrates the computational phenotyping process using PARAFAC2. Each slice of the irregular tensor $X$ will be factorized by PARAFAC2 to three factor matrices. $\U_k \in \mathbb{R}^{I_k\times R} $ captures temporal evolution of the $R$ phenotypes for patient $k$. $\V \in \mathbb{R}^{J\times R}$ contains the $R$ phenotypes. Each row of $V$ matrix represents one latent and potentially interpretable phenotype. Each medical feature is represented with a weight indicating its contribution to the phenotype in each row. $\S_k\in \mathbb{R}^{R\times R}$ is a diagonal matrix with the importance membership of patient $k$ in each one of the $R$ phenotypes.
The right side table in Figure \ref{fig_PARAFAC2} shows three example phenotypes represented in the $V$ matrix with the top 4 highest weighted medical features in each phenotype (the example is from the MIMIC-EXTRACT dataset \cite{mimicextract} and the associated weight is shown in parenthesis). Each phenotype represents a set of related medical features which can suggest meaningful subgroups. 


\begin{figure}
\centering
\includegraphics[scale=0.25]{./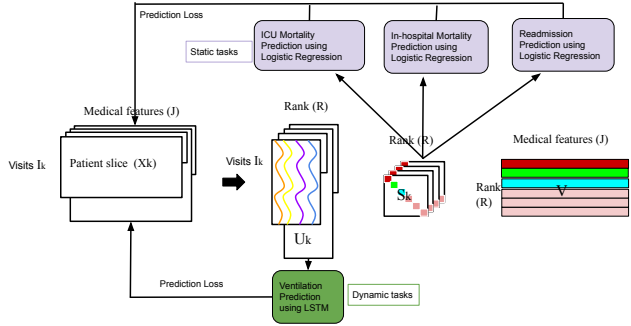}

\caption{Overview of MULTIPAR on MIMIC-EXTRACT dataset}

\label{fig_MULTI}
\end{figure}

Various works have proposed improvements to the basic PARAFAC2 model. SPARTan \cite{perros2017spartan} modified the MTTKRP calculation order of PARAFAC2 to handle large and sparse data. COPA \cite{afshar2018copa} further introduced various constraints to improve the interpretability of the factor matrices for more meaningful phenotype extraction. REPAIR \cite{repair2020} and LogPar \cite{logpar2020} added a low-rankness constraint to improve PARAFAC2 robustness to missing data. Despite these advances, current PARAFAC2 models are completely unsupervised and only attempt to learn the latent factors to best recover  the  original  observations. Some works have considered using the latent factors as features for downstream prediction tasks (e.g., in-hospital mortality or hospital readmission prediction using extracted phenotypes), and achieved limited performance gain than using the raw data as features. This is because the  tensor factorization does not take advantage of the downstream labels. The extracted factors, while interpretable, may not be the most  representative or discriminating for downstream prediction tasks. 
In addition, current work \cite{afshar2018copa,logpar2020,repair2020,perros2017spartan,Xiaoqian2017Discriminative} using tensor factorization for predictive tasks only consider a single task (e.g., in-hospital mortality prediction) and ignore useful information from other prediction tasks. 

To address these limitations, we propose MULTIPAR: a supervised irregular tensor factorization with multi-task learning for both phenotype extraction and predictive learning, as shown in Figure \ref{fig_MULTI}.  
MULTIPAR jointly optimizes the tensor factorization and downstream prediction together, so that the factorization can be ``supervised" or informed by the predictive tasks. In addition, we use a multi-task framework to leverage information from multiple predictive tasks. It provides flexibility to incorporate both one-time or static (e.g. in-hospital mortality prediction) and continuously changing or dynamic (e.g. the need for ventilation) outcomes. To achieve this, the temporal features from $\U$ matrix are used for dynamic prediction and the features from $\S$ matrix are used for static prediction, as shown in Figure \ref{fig_MULTI}. 

Our main hypothesis is that such a supervised multi-task framework can yield not only more meaningful phenotypes but also better predictive accuracy than performing tensor factorization independently followed by predictive learning using the phenotypes extracted from the tensor. In addition, by sharing the representation across tasks, the learned phenotypes can generalize better for each task. Our empirical studies on two large publicly available EHR datasets with representative predictive tasks (both static and dynamic) and different models (e.g. logistic regression and recurrent neural networks) verified this hypothesis.

In summary, we list our main contributions below:
\begin{enumerate}
    \item We propose a supervised framework for PARAFAC2 tensor factorization and downstream prediction tasks such that the factorization can be ``supervised" or informed by the predictive tasks. 

    \item We use a multi-task framework to leverage information from multiple predictive tasks and provide flexibility to incorporate both static and  dynamic tasks and different models (e.g. logistic regression and recurrent neural networks). 
    \item We introduce a novel unified and dynamic weight selection method for weighing the tensor factorization and predictive tasks during the optimization process, where the tensor factorization is considered as one task, to achieve an overall optimized result. 
    \item We evaluate MULTIPAR's tensor reconstruction quality, predictability, scalability, and interpretability on two real-world temporal EHR datasets through a set of experiments, which verify MULTIPAR can identify more meaningful subgroups and yield stronger predictive performance compared to existing state-of-the-art approaches. 
\end{enumerate}

\section{Background}

\subsection{Irregular Tensor Factorization}
 PARAFAC2 model is the state-of-the-art tensor factorization framework for  irregular tensor, i.e., tensors that do not align along one of its modes. 
The PARAFAC2 model decomposes each frontal slice of the irregular tensor $\X$ as $\mat{U}_k\S_k\V^{\top}$, where $\mat{U}_k \in \mathbb{R}^{I_k\times R}$, $\S_k\in \mathbb{R}^{R\times R}$ is diagonal and $\V \in \mathbb{R}^{J\times R}$. $R$ is the target rank. Uniqueness is an important property in tensor factorization models that ensures the solution is not an arbitrarily rotated version of the actual latent factors. In order to enforce uniqueness, Harshman \cite{Harshman1970FoundationsOT} imposed the constraint $\mat{U}^T\mat{U}_k = \Phi, \forall k$. This is equivalent to each $\mat{U}_k$ being decomposed as $\mat{U}_k = \Q_K\H$, where $\Q_k\in\R^{I_k\times R}$, $\Q_k^{\top}\Q_k = \I$, and $\H \in\mathbb{R}^{R\times R}$. Given the above modeling, the standard algorithm to fit PARAFAC2 solves the following optimization problem:

\begin{definition} (Original PARAFAC2 model)
\label{def.parafac2.model}
$$ \argmin _{\{\mat{U}_k\},\{\S_k\},\V} \sum _{k=1}^K\frac{1}{2} \|\X_k-\mat{U}_k\S_k\V^{\top}\|_F^2,$$
    subject to: \ $\mat{U}_k = \Q_k\H, \Q_k^{\top}\Q_k = \I, \S_k \text{ is diagonal}$.
\end{definition}

Given a tensor representing the EHR data as in Figure \ref{fig_PARAFAC2} where each slice $X_k$ represents the information of patient $k$ with $I_k$ visits and $J$ medical features, PARAFAC2 decomposes the irregular tensor $\X$ into the factorization matrices which have the following  interpretations:

\begin{itemize}
    \item $\U_k \in \mathbb{R}^{I_k\times R} $ represents the temporal trajectory of $I_k$ clinical visits in each one of the $R$ phenotypes. 
    
    \item $\V \in \mathbb{R}^{J\times R}$ represents the relationship between medical features and phenotypes. 

    \item $\S_k\in \mathbb{R}^{R\times R}$ represents the relationship between patients and phenotypes. Each column in S represents one phenotype, and if a patient has the highest weight in a specific phenotype, it means the patient is mostly associated with or exhibits a particular phenotype.

\end{itemize}
SPARTan \cite{perros2017spartan} was developed to decompose large sparse datasets, and COPA \cite{afshar2018copa} extended SPARTan to enhance the interpretability by adding more constraints, e.g., smoothness on $\U_k$ and sparsity on $\V$. REPAIR \cite{repair2020} and LogPar \cite{logpar2020} added low-rankness constraints to improve the robustness of PARAFAC2 model to handle missing values. However, no current work has considered using downstream tasks to supervise and improve the predictability of PARAFAC2 model as Table \ref{tbl:improvement} shows. We note that while we focus on introducing the predictability of MULTIPAR in this paper, the framework can be easily extended to incorporate robustness by adding low-rankness constraint on factor matrices.

\begin{table}
\centering
\resizebox{0.5\textwidth}{!}{%
\begin{tabular}{l l l l l} \toprule
\textbf{PARAFAC2 model}&\textbf{Scalability}&\textbf{Predictability}&\textbf{Robustness}&\textbf{Interpretability}\\ 
\midrule
Original PARAFAC2 & x & x & x & x\\ 
SPARTan \cite{perros2017spartan}& \checkmark & x & x & x\\ 
COPA \cite{afshar2018copa} & \checkmark & x & x & \checkmark\\
REPAIR \cite{repair2020} & \checkmark & x & \checkmark & \checkmark \\ 
LogPar \cite{logpar2020} & \checkmark & x & \checkmark & x \\ 
MULTIPAR & \checkmark & \checkmark & x & \checkmark \\ \bottomrule
\end{tabular}}
\caption{Comparison of existing PARAFAC2-based models }\label{tbl:improvement}
\end{table}




\subsection{Supervised and Multi-task learning Framework}
Supervision can enhance traditional unsupervised tasks such as clustering for a wide range of applications, e.g., graph learning \cite{jointwang}, pattern classification \cite{jointliu}. 
Wang et al. proposed a supervised feature extraction framework using discriminative clustering to improve graph learning model's clustering accuracy \cite{jointwang}. Liu te al. proposed a supervised minimum similarity projection framework using lowest correlation representation to improve pattern model's classification accuracy \cite{jointliu}. 


Over the past years, multi-task learning (MTL) \cite{Zhang2018overview, Zhang2017survey} has attracted much attention in the artificial intelligence and machine learning communities. Traditional machine learning frameworks solve a single learning task each time, which ignores commonalities and differences across different tasks. MTL  aims to learn multiple related tasks jointly so that the knowledge contained in one task can be leveraged by other tasks, with the hope of improving generalization performance by learning a shared representation
\cite{Baxter2000, Thrun1999}.
MTL has been used successfully across all applications, from natural language processing \cite{Collobert2008, multilabel2009} and speech recognition \cite{Deng2013,chen2015multitask} to computer vision \cite{Girshick2015FastR} and drug discovery \cite{Ramsundar2015drug, multitask2019}. However, no current work has considered improving predictability of tensor factorization using MTL.

 \section{Proposed Method}
In this section, we present the MULTIPAR model in the context of EHR pheynotyping and its optimization. The general framework is applicable to any irregular tensor factorization and predictive learning tasks such as recommender systems. 
\subsection{Problem Formulation}
We formalize the objective function for the MULTIPAR model in Definition \ref{def.MULTIPAR.model}. The PARAFAC2 loss for $\tX$ ensures the reconstructed tensor closely approximates the original tensor. The static outcomes loss and dynamic outcomes loss are separate prediction tasks. Static outcome prediction tasks have a one-time or static labels, and dynamic outcome prediction tasks have a continuously changing or temporal dynamic labels for each time stamp. An approximate uniqueness constraint ensures tensor factorization uniqueness. For EHRs phenotype discovery, various constraints can be imposed on the factorization matrices to yield meaningful and high-interpretability phenotypes. The MULTIPAR model accommodates such interpretability-purposed constraints in eq. (\ref{eq.MULTIPAR.model.formal})  including:  non-negativity for $c_1(\S_k)$, sparsity for $c_2(\V)$. We explain each of the loss components and constraints in detail below. 
\begin{definition} (MULTIPAR objective function)
\label{def.MULTIPAR.model}
  \begin{equation}
  \label{eq.MULTIPAR.model.formal}
    \begin{split}
  &\argmin_{\Q_k,\H,\S_k,\V}\sum_{k=1}^K\sum _{(i,j)\in \Omega} \overbrace{\rho _1L_1(\X_{ijk}, \{\U_k\S_k\V^{\top}\}_{ijk})}^{\text{PARAFAC2 loss for } \tX}\\&+ \overbrace{\rho _2L_2({\S_k})}^{\text{static outcomes loss}}
  + \overbrace{\rho _3L_3({\U_k})}^{\text{dynamic outcomes loss}}\\&
  + \overbrace{\varrho_1\|\U_k^{\top}\U_k - \I\|_F^2\big{)}}^{\text{approximate uniqueness constraint}}
  + \overbrace{\sum _{k=1}^K c_1(\S _k)}^{\text{non-negativity constraint}} + \overbrace{c_2\|\V\|_1}^{\text{sparsity constraint}}
  \end{split}
\end{equation}
where $k = 1,...,K$, $\H, \{\S_k\}, \I \in \mathbb{R}^{R \times R}$. $c_1$ is the nonnegativity constraint, and $c_2\|\V\|_1$ is the sparsity penalty.
\end{definition}

\partitle{PARAFAC2 loss}
The PARAFAC2 tensor factorization loss can ensure the reconstructed tensor closely approximate the original tensor. To accommodate different data types, the PARAFAC2 loss can be any smooth loss function, e.g., Least square loss, Poisson loss \cite{Hong_2020} and Rayleigh Loss \cite{Hong_2020}.

\partitle{Static outcomes loss}
Previous PARAFAC2 models separate the PARAFAC2 training process and downstream prediction process. For example, in-hospital mortality prediction accuracy may be used as the metric to measure the  predictability of the phenotypes extracted by the model. In the MULTIPAR model, we optimize the downstream prediction tasks and tensor factorization together by adding the prediction losses of the prediction tasks to the objective function.
If the prediction task has one label per patient, we denote it as a static outcome prediction task. For illustrative purposes, we use a logistic regression model on the $\S$ matrix to predict static outcome tasks, and add the cross-entropy loss to the objective function. In fact, any differentiable loss function (e.g., square loss, exponential loss) can be incorporated in the objective function.

\partitle{Dynamic outcomes loss}
Different from static outcomes, dynamic outcomes have labels at each timestamp. For example, predicting whether a patient will be on a ventilator at a given future time can be used to measure the model's predictability. For illustrative purposes, we use the long short-term memory (LSTM) model on the $\U$ matrix to predict each patient's dynamic outcome labels, and add the loss of the LSTM model to the objective function. Similar to the static outcome loss, other models (e.g., gated recurrent units, vanilla recurrent neural networks) and their associated loss functions can be incorporated in the objective function.

\partitle{Approximate uniqueness constraint}
The optimization of the original PARAFAC2 model adopts the alternating direction method of multipliers (AO-ADMM) \cite{AOADMM}, which can not make full use of the parallel computation feature of GPUs. To adopt mainstream deep learning frameworks like Pytorch and Tensorflow, we use a stochastic gradient descent (SGD) based optimization approach. The uniqueness constraint in the original PARAFAC2 model is $\Q_k^{\top} \Q_k = \I$. Similar to LogPar \cite{logpar2020}, to optimize $\Q_k$ we relax the uniqueness constraint to $\|\Q_k^{\top} \Q_k - \I\|_F^2$.

\partitle{Sparsity on $\V$}
The $\V$ matrix captures the association between a medical feature and a particular phenotype. In order to improve interpretability, we introduce a sparsity constraint on the $\V$ matrix. $l_0$ and $l_1$ norms are two popular sparsity regularization techniques. The $l_0$ regularization norm, also relaxed by hard thresholding, will cap the number of non-zero values in a matrix. The $l_1$ regularization norm, also relaxed by soft thresholding, will shrink matrix values towards zero. As hard thresholding is a non-convex optimization problem which can not be optimized by the SGD framework, we adopt the soft thresholding, which is convex and can be migrated into the SGD optimization framework.

\partitle{Non-negativity on $\S$}
The diagonal matrix $\S$ indicates the importance membership of patient $k$ in each one of the $R$ phenotypes. Since only non-negative membership values makes sense, we zero out the negative values in $\S$. 

\partitle{SDW: Smooth dynamic weight selection}
Numerous deep learning applications benefit from MTL with multiple regression and classification objectives.
Yet the performance of MTL is strongly dependent on the
relative weighting between each task’s loss. Our objective function consists of several losses from the tensor factorization and the predictive tasks. Each of this loss is associated with a weight. While Definition \ref{def.MULTIPAR.model} shows $\rho _1$ as the weight for tensor loss, $\rho _2$ and $\rho _3$ as the accumulative weights for static and dynamic tasks respectively, here we use $Weight_n(t)$ to denote the weight for each individual task $n$ in epoch $t$. 
A key challenge is how to tune these weights for different tasks. 

While the DWA weight selection \cite{MTA2018} was proposed to dynamically change the task weights at each epoch by considering the rate of change of the loss over the epoch, the noisy nature of SGD weights can cause drastic fluctuations in the task weights between epochs. This can cause oscillating behavior between the various tasks and impedes convergence of the algorithm. Therefore, we propose a novel smooth dynamic weight selection method to choose the weight for each task.
We first calculate the relative descending rate of each task loss and denote it as $\omega_n(t-1)$. $t$ here represents an epoch index:

\begin{equation}
\label{eq.weightloss}
     \omega_n(t-1) = \frac{Loss_n(t-1)}{Loss_n(t-2)}.
\end{equation}

We then calculate the weight for each task using the following equation:
\begin{equation}
\label{eq.weightupdate}
    Weight_n(t) := N\frac{exp(\sum_{j=t-1}^m(\omega_n(t-j)/C)/m)}{\sum_{i=1}^N{exp(\sum_{j=t-m}^m(\omega_i(t-j)/C)/m)}}. 
\end{equation}

Intuitively, each task weight is dynamically updated based on a "smoothed" descending rate of the loss, the higher the rate (i.e. the more the task contributes to the optimization objective), the higher the weight for the task in next epoch. 
We use $C$ to control the softness distribution between different tasks. If $C$ is large enough, the weight for each task will be uniform. Different from \cite{hinton2015distilling}, we introduce $m$, the weight update window size. The task weights are updated based on an average of the descending rate of the loss over $m$ past epochs from $t-m$ to $t-1$ (instead of using one previous iteration). The main rationale for this smoothing is to reduce the SGD update uncertainty and training data selection randomness. Finally, a softmax operator, which is multiplied by the number of tasks $N$, ensures the sum of the weight equals $N$. For $t=1$, we initialize all the weights to 1.



\subsection{Optimization}
To solve the optimization problem in Eq. \eqref{eq.MULTIPAR.model.formal}, MULTIPAR follows an alternative optimization strategy where we optimize one variable individually with all other variables fixed. According to the subproblem differentiable, we group the variables into two groups: pure smooth subproblems which can be directly solved by SGD and proximal mapping-based smooth subproblems \cite{parikh2014proximal}. Proximal map is a key building block for optimizing nonsmooth regularized objective functions, e.g., the $\|\cdot\|_1$ $\ell_1$-norm regularization function for inducing sparsity and $\|\cdot\|_*$ nuclear norm regularization for inducing low-rankness. In the following, we omit the iteration number for brevity in notation.

\subsubsection{Pure Smooth Subproblems Updates.}
For the pure smooth subproblems, we use SGD to update the variables, which include the following three parts:

\partitle{Update of $\U_k$} The subproblem of $\U_k$ takes the form as follows
\begin{equation}
\label{eq.update.U}
\begin{split}
    & \arg\min_{\U_k} \sum _{(i,j)\in \Omega} \rho _1L(\X_{ijk}, \{\U_k\S_k\V^{\top}\}_{ijk}) \\ & + \varrho_1 \|\U_k-\Q_k\H\|_F^2 + \rho _3L_3({\U_k}).
\end{split}
\end{equation}

\begin{algorithm}[t]
\caption{Optimization Framework for MULTIPAR}
\label{optimiztionalgorithm}
\begin{algorithmic}[1]
\REQUIRE Input tensor $\tX$; Model parameters $\rho_1$-$\rho _3$, $\varrho_1$-$\varrho_2$; Interpretability constraint types $c_1,c_2$; Initial rank estimation $R$.
\WHILE{Not reach convergence criteria}
\STATE Update $\{\U_k\}$ using eq.(\ref{eq.update.U}) by SGD;
\STATE Update $\{\Q_k\}$ using eq.(\ref{eq.update.Q}) by SGD; 
\STATE Update $\H$ using eq.(\ref{eq.update.H}) by SGD;
\STATE Update $\S_k$ using eq.(\ref{eq.update.S}) by Proximal/Projected SGD;
\STATE Update $\V$ using eq.(\ref{eq.update.V}) by Proximal/Projected SGD;
\STATE Calculate weight for each prediction task using eq. \ref{eq.weightloss} and eq.\ref{eq.weightupdate} by SDW;
\ENDWHILE
\ENSURE Phenotype factor matrices $\{\U_k\}=\{\Q_k\H\}, \{\S_k\}, \V$.
\end{algorithmic}
\end{algorithm}

\partitle{Update of $\Q_k$} The subproblem of $\Q_k$ takes the form as follows
\begin{equation}
\label{eq.update.Q}
    \arg\min_{\Q_k} \varrho_1 \|\U_k-\Q_k\H\|_F^2 + \varrho_2\|\Q_k^{\top}\Q_k - \I\|_F^2.
\end{equation}

\partitle{Update of $\H$} The subproblem of $\H$ takes the form as follows
\begin{equation}
\label{eq.update.H}
    \arg\min_{\H} \sum_{k=1}^K\|\U_k-\Q_k\H\}\|_F^2 
\end{equation}

\subsubsection{Proximal Mapping-base Smooth Subproblems Updates}
For the nonsmooth subproblems, we propose a proximal mapping-based \cite{parikh2014proximal} update, which include the following two parts. 

\partitle{Update of $\S_k$} The subproblem of $\S_k$ takes the form as follows
\begin{equation}
\label{eq.update.S}
    \arg\min_{\S_k} \sum _{(i,j)\in \Omega}\rho _1 L(\X_{ijk}, \{\U_k\S_k\V^{\top}\}_{ijk}) + \rho _2L_2({\S_k}) + c_1(\S_k). 
\end{equation}
We use projected SGD to update $\S_k$, where each step takes the following form
\begin{equation}
    \S_k = \max(0,\S-\lambda \G[\S_k]),
\end{equation}
where $G[\S_k]$ denotes the stochastic gradient of the smooth part $\sum _{(i,j)\in \Omega} \rho _1 L(\X_{ijk}, \{\U_k\S_k\V^{\top}\}_{ijk}) + \rho _1 L_2({\S_k})$ with respect to $\S_k$.

\partitle{Update of $\V$} The subproblem of $\V$ takes the form as follows
\begin{equation}
\label{eq.update.V}
\begin{split}
        \arg\min_{\V} \sum_{k=1}^K\sum _{(i,j)\in \Omega} \rho _1L(\X_{ijk}, \{\U_k\S_k\V^{\top}\}_{ijk})
        + c_2 \|\V\|_1. 
\end{split}
\end{equation}
We use soft-thresholding operator to update $\V$, where each step takes the following form: 
${\tt soft-thresholding}(\V-\lambda \G[\V]) = sign((\V-\lambda \G[\V]))((\V-\lambda \G[\V])-\frac{c_2}{\lambda})$, 
where $\lambda$ is the step-size and $G[\V]$ denotes the stochastic gradient of the smooth part\\
$\sum_{k=1}^K\sum _{(i,j)\in \Omega}\rho _1 L(\X_{ijk}, \{\U_k\S_k\V^{\top}\}_{ijk})$ with respect to $\V$.

\partitle{The complete algorithm} The optimization procedure is summarized in Algorithm \ref{optimiztionalgorithm}.

\subsection{Complexity and Convergence Analysis}
The following theorem summarizes the computational complexity of Algorithm \ref{optimiztionalgorithm}.
\begin{theorem} (Per-iteration computational complexity of MULTIPAR algorithm)
For an input tensor $\O_k:\mathbb{R}^{I_k\times J},\ for\ k=1,...,K$ and initial target rank estimation $R$, Algorithm \ref{optimiztionalgorithm}'s per-iteration complexity is $\mathcal{O}(3R^2JK)$.
\end{theorem}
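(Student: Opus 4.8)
The plan is to establish the bound by decomposing one outer iteration of Algorithm \ref{optimiztionalgorithm} into its six update steps, bounding the arithmetic cost of each step separately, and then summing. Throughout I would exploit three structural facts that keep the per-step cost low: $\S_k$ is diagonal, so any multiplication by $\S_k$ costs a factor of $R$ rather than $R^2$; $\Q_k$ has orthonormal columns; and every data-dependent sum runs only over the observed index set $\Omega$, so the residual $\X_k-\U_k\S_k\V^\top$ and its gradients are only ever formed on entries that are actually present. The governing quantities are the Gram-type matrices $\V^\top\V$, $\H^\top\H$, and $\S_k\V^\top\V\S_k$, each of size $R\times R$, which I would precompute and reuse across slices so that no individual gradient evaluation is recomputed from scratch.

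First I would handle the three pure-smooth subproblems solved by SGD. For the update of $\U_k$ in eq.~\eqref{eq.update.U}, the dominant work is the gradient of the PARAFAC2 term, which reduces to an MTTKRP-style product against $\V\S_k$ together with the formation of $\S_k\V^\top\V\S_k$, giving $\mathcal{O}(R^2J)$ per slice; the penalty $\varrho_1\|\U_k-\Q_k\H\|_F^2$ contributes only $\mathcal{O}(I_kR^2)$ through the product $\Q_k\H$. The update of $\Q_k$ in eq.~\eqref{eq.update.Q} requires $\Q_k\H$ and $\Q_k^\top\Q_k$, both $\mathcal{O}(I_kR^2)$, and the update of $\H$ in eq.~\eqref{eq.update.H} costs $\mathcal{O}(I_kR^2)$ per slice for the residual $\U_k-\Q_k\H$. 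Summing these three steps over $k=1,\dots,K$ shows that each contributes a term of order $R^2J$ per slice, so their aggregate is $\mathcal{O}(R^2JK)$ up to the lower-order $I_kR^2$ pieces.

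Next I would bound the two proximal subproblems. For $\S_k$ in eq.~\eqref{eq.update.S}, the stochastic gradient of the PARAFAC2 part is again an $\mathcal{O}(R^2J)$ computation reusing $\V^\top\V$, while the projection $\S_k=\max(0,\S_k-\lambda\G[\S_k])$ is an elementwise operation of cost $\mathcal{O}(R)$, hence negligible. For $\V$ in eq.~\eqref{eq.update.V}, the gradient aggregates contributions from all slices and the soft-thresholding step is an $\mathcal{O}(JR)$ elementwise map, again dominated by the gradient. Finally, the SDW step of eq.~\eqref{eq.weightupdate} only manipulates the $N$ scalar task losses over a window of size $m$, costing $\mathcal{O}(Nm)$, which is independent of the tensor dimensions and therefore lower order. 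Collecting the five factor-matrix updates, the leading term is the $R^2J$ cost incurred in exactly the three subproblems in which the $J$-dimensional factor $\V$ participates (those of $\U_k$, $\S_k$, and $\V$), which over $K$ slices yields $\mathcal{O}(3R^2JK)$.

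The main obstacle I anticipate is the bookkeeping needed to confirm that no gradient evaluation secretly costs more than $\mathcal{O}(R^2J)$ per slice. In particular, the naive reconstruction $\U_k\S_k\V^\top$ over all entries would cost $\mathcal{O}(I_kJR)$, so the argument hinges on showing that, by restricting to $\Omega$ and by precomputing and sharing the $R\times R$ Gram matrices across slices, every $J$-dependent operation can be organized so that $J$ meets only a factor of $R^2$ rather than a factor of $I_kR$. Making this reuse explicit, and verifying that the proximal and weight-selection operators stay strictly lower order, is the crux; once it is in place, the final summation over the three $\V$-involving subproblems and the $K$ slices delivers the stated $\mathcal{O}(3R^2JK)$ bound.
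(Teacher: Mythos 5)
Your overall strategy --- decomposing one outer iteration of Algorithm \ref{optimiztionalgorithm} into its update steps, costing each step, and summing --- is the same line-by-line accounting the paper itself uses, though your per-step estimates differ from the paper's (which charges $\mathcal{O}(R(R+J+K))$ to the $\U_k$ update, $\mathcal{O}(\min\{R^2I,RI^2\})$ to the $\Q_k$ update, and $\mathcal{O}(R^2(R+J+K))$ to the remaining lines before asserting the $\mathcal{O}(3R^2JK)$ total). The difficulty is that your argument contains a genuine gap, and you have correctly located it yourself: the claim that every $J$-dependent gradient evaluation can be organized so that $J$ meets only a factor of $R^2$ per slice. This cannot be established in general. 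The PARAFAC2 loss is a sum over observed entries $(i,j)\in\Omega$, and evaluating a single model prediction $(\U_k\S_k\V^{\top})_{ij}$ already costs $\Theta(R)$; hence any gradient of this term must touch every observed entry of slice $k$ and costs at least on the order of $R$ times the number of such entries, which for dense slices is $I_kJR$. The Gram-matrix trick you invoke (precomputing $\V^{\top}\V$ and $\S_k\V^{\top}\V\S_k$) accelerates only the model-dependent part of a least-squares gradient, e.g.\ $\U_k\S_k\V^{\top}\V\S_k$; the data-dependent part $\X_k\V\S_k$ still requires $\mathcal{O}(\mathrm{nnz}(\X_k)\,R)$ work, and for the general smooth losses the paper explicitly allows (Poisson, Rayleigh) even the model-dependent part must be formed entrywise. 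So your per-slice bound of $\mathcal{O}(R^2J)$ holds only under an unstated assumption such as $I_k=\mathcal{O}(R)$ or $\mathrm{nnz}(\X_k)=\mathcal{O}(JR)$, and without it the final summation yielding $\mathcal{O}(3R^2JK)$ does not go through.

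The remaining pieces of your accounting are sound and strictly lower order: the $\Q_k$ and $\H$ subproblems cost $\mathcal{O}(I_kR^2)$ per slice, the projection onto the nonnegative diagonal costs $\mathcal{O}(R)$, soft-thresholding of $\V$ costs $\mathcal{O}(JR)$, and the SDW weight update manipulates only $N$ scalars over a window of size $m$. The entire burden of the theorem therefore rests on the data-fitting gradients, which is precisely the step you deferred as ``the crux'' rather than proved. To close the gap you would need either to state the sparsity or size assumption explicitly, or to replace the claimed bound by one of the form $\mathcal{O}\bigl(R\cdot\mathrm{nnz}(\tX)+R^2(J+\sum_{k=1}^{K}I_k)\bigr)$, which is what the computation you describe actually delivers.
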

\begin{proof}
MULTIPAR's per-iteration complexity breaks down as follows: Line 2 costs $\mathcal{O}(R(R+J+K))$; Line 3 costs $\mathcal{O}(\min\{R^2 I,RI^2\})$, where $I$ denotes the maximum among $\{I_k\}$; Line 4,5,6 cost $\mathcal{O}(R^2(R+J+K))$. As a result, the per-iteration complexity is $\mathcal{O}(3R^2JK)$.
\end{proof}

\begin{theorem} (Convergence Rate of MULTIPAR algorithm)
Let $\Phi[t] = (\Q[t],\H[t],\S[t],\V[t])$ be the iterates of MULTIPAR at iteration $t$, and $\mathcal{L} = L_1+L_2+L_3$ be the loss function of MULTIPAR. Under certain conditions, MULTIPAR will converge in terms of $\lim_{t\mapsto +\infty} \expect[\|\mathcal{L}(\nabla \mathcal{L}(\Phi [t]))\|] = 0$.
\begin{proof} (Sketch of proof)
MULTIPAR is a multi-block SGD algorithm for nonconvex optimization in nature, which has been extensively studied in optimization literature. The above theorem follows the convergence results established in \cite{xu2015block}.
\end{proof}

\end{theorem}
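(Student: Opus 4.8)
The plan is to recognize that Algorithm~\ref{optimiztionalgorithm} is precisely a block stochastic gradient (BSG) iteration in the sense of \cite{xu2015block}, and then to discharge the hypotheses of their nonconvex--composite convergence theorem one by one for the MULTIPAR objective. First I would split the objective of Eq.~\eqref{eq.MULTIPAR.model.formal} into a smooth part $f$ gathering the data-fitting losses $\rho_1 L_1+\rho_2 L_2+\rho_3 L_3$ together with the quadratic coupling and orthogonality penalties weighted by $\varrho_1,\varrho_2$, and a closed convex nonsmooth part $r=\sum_k c_1(\S_k)+c_2\norm{\V}_1$, where $c_1$ (nonnegativity) is the indicator of a convex cone and $c_2\norm{\V}_1$ is the $\ell_1$ penalty. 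The five update blocks $\{\U_k\}$, $\{\Q_k\}$, $\H$, $\{\S_k\}$, $\V$ are then exactly the coordinate blocks cycled through in lines 2--6, with the last two carrying the proximable regularizers and hence updated by proximal/projected SGD.

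Next I would verify the standing assumptions of the BSG framework. (i) \emph{Unbiased stochastic gradients with bounded variance}: since each step samples observed entries from $\Omega$, the partial gradient computed on a minibatch is an unbiased estimator of the corresponding full block gradient, and I would bound its second moment on the feasible set. (ii) \emph{Block-wise Lipschitz smoothness}: I would show that on any bounded set the gradient of $f$ with respect to each block is Lipschitz, with a modulus expressed through the spectral norms of the remaining factors. (iii) \emph{Diminishing step sizes}: I would impose the Robbins--Monro schedule $\lambda_t\to 0$, $\sum_t\lambda_t=\infty$, $\sum_t\lambda_t^2<\infty$. (iv) \emph{Proximability and convexity of $r$}: the nonnegativity projection and the soft-thresholding operator already written below Eqs.~\eqref{eq.update.S}--\eqref{eq.update.V} are exactly the proximal maps of $c_1$ and $c_2\norm{\V}_1$, both convex, so the composite structure is admissible.

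With these in hand I would invoke the nonconvex--composite result of \cite{xu2015block}, which guarantees that the expected stationarity residual --- the norm of the proximal-gradient mapping of $f+r$, reducing to $\expect[\norm{\grad\mathcal{L}(\Phi[t])}]$ on the smooth blocks --- converges to zero along the iterates, yielding exactly the claimed $\lim_{t\to+\infty}\expect[\norm{\grad\mathcal{L}(\Phi[t])}]=0$. The bookkeeping is a telescoping sufficient-decrease argument: one establishes an expected per-iteration descent of $f+r$ up to an $\order{\lambda_t^2}$ variance term, sums over $t$, and combines $\sum_t\lambda_t=\infty$ with summability of the variance terms to force the residual to vanish.

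The hard part will be assumption (ii): establishing a \emph{uniform} block-Lipschitz constant despite the multilinear coupling $\U_k\S_k\V^{\top}$, whose smoothness modulus with respect to one factor depends on the magnitudes of the others, so that a genuinely uniform constant exists only after proving the iterates stay in a compact set. I would secure this boundedness from the problem structure --- the orthogonality-type penalty on $\Q_k$ together with $\U_k=\Q_k\H$ controls $\U_k$, the nonnegativity constraint and loss $L_2$ control $\S_k$, and the $\ell_1$ penalty provides coercivity in $\V$ --- so that the effective Lipschitz constants remain finite along the trajectory. A secondary subtlety worth flagging is that the criterion as literally written captures only the smooth gradient; the rigorous nonconvex--composite statement must phrase stationarity through the proximal-gradient (subdifferential) residual, so that the nonsmooth terms $c_1$ and $c_2\norm{\V}_1$ are correctly accounted for at the limit.
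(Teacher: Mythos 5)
Your proposal follows essentially the same route as the paper: both reduce the claim to the multi-block stochastic gradient convergence theory of \cite{xu2015block}, treating lines 2--6 of Algorithm~\ref{optimiztionalgorithm} as cyclic block updates with proximal/projected steps on the nonsmooth blocks. The paper's own proof is only a one-line citation, so the hypotheses you propose to verify --- unbiased bounded-variance stochastic gradients, block-wise Lipschitz smoothness secured via boundedness of the iterates, a Robbins--Monro step-size schedule, and convexity/proximability of $c_1$ and $c_2\|\V\|_1$ --- are precisely what the paper hides behind ``under certain conditions,'' and your observation that stationarity must be measured through the proximal-gradient residual rather than the raw gradient (the printed criterion $\|\mathcal{L}(\nabla\mathcal{L}(\Phi[t]))\|$ appears to be a typo for such a residual) is a correct refinement the paper omits.
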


\section{Experiment}
\subsection{Dataset}
We use two real-world datasets to evaluate MULTIPAR in terms of its reconstruction quality, predictive performance, interpretability, and scalability.

\noindent\textbf{eICU \footnote{\url{https://eicu-crd.mit.edu}} \cite{eICU}:} 
The eICU Collaborative Research Database is a freely available multi-center database for critical care research. It contains variables used to calculate the Acute Physiology Score (APS) III for patients. We select $202$ diagnosis codes that have the highest frequency, as in \cite{KimDiscriminative}. The resulting number of unique ICU visits is 145426. The maximum number of observations for a patient is 215. We select three static outcome prediction tasks, including intubated prediction, ventilation prediction, and dialysis prediction. The ventilation prediction here is a static prediction task indicating whether a patient needs to be ventilated at the time of the worst respiratory rate, we will use "vent-res" as the name for this task. 

\noindent\textbf{MIMIC-EXTRACT \footnote{\url{https://github.com/MLforHealth/MIMIC_Extract/}} \cite{mimicextract}:} 
MIMIC-Extract is an open-source pipeline for transforming raw EHR data in MIMIC-III into data frames that are directly usable in common machine learning pipelines. We use the vitals labs mean table, which contains 34,472 patients with 104 features (Vital lab codes). The maximum number of observations for a patient is 240. We further normalize the data to [0,1]. We select three static outcome prediction tasks, including in-hospital mortality prediction, readmission prediction, ICU mortality prediction, and one dynamic outcome prediction task, which is ventilation prediction for every visit.


\subsection{Evaluation Metrics}
In order to test the tensor reconstruction quality of MULTIPAR model, we adopt the $FIT\in(-\infty,1]$ score \cite{BroPARA} as the quality measure (the higher the better):
\begin{equation}
  FIT = 1- \frac{\sum_{k=1}^{K}\|\X_k-\U_k\S_k\V^T \| ^2}{\sum_{k=1}^{K}\|\X_k \| ^2}.
\end{equation}
The original tensor, denoted as $\{\X_{k}\}$, serves as the ground truth. $\U_k,\S,\V$ are factor matrices after the MULTIPAR tensor factorization. We evaluate the derived phenotypes' predictability power using the PR-AUC score of the prediction tasks. We split the data with a proportion of 8:2 as training and test sets and use PR-AUC score to evaluate the predictive power.

\subsection{Methods for Comparison}
We compare MULTIPAR with three baseline methods: SPARTan, COPA, and  singlePAR. SPARTan and COPA are two state-of-the-art irregular tensor factorization methods. We also compare against a supervised single task PARAFAC2, which is a single-task version of MULTIPAR.

\begin{itemize}
\item \textbf{SPARTan \cite{perros2017spartan} - scalable PARAFAC2}: A tensor factorization method for fitting large and sparse irregular tensor data. It only considers the tensor reconstruction loss.
\item \textbf{COPA \cite{afshar2018copa} - scalable PARAFAC2 with additional regularizations}: An irregular tensor factorization method that introduces various constraints/regularizations to improve the interpretability of the factor matrices. For both SPARTan and COPA, the extracted phenotypes are used for training the models for the downstream predictive tasks. 

\item \textbf{SinglePAR - supervised single task PARAFAC2}: The supervised irregular tensor factorization with single prediction task (single task version of MULTIPAR). The weight of tensor factorization and prediction tasks are also tuned using SDW.
\end{itemize}

\begin{figure}
\centering\includegraphics[scale=0.3]{./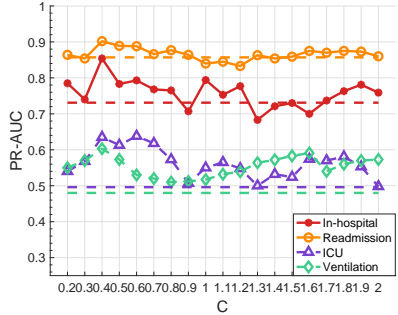}
\caption{PR-AUC score using different C}
\label{fig:Tchoose}
\end{figure}

\subsection{Implementation Details}
The multi-task dynamic weight selection of MULTIPAR has two hyper-parameters that need to be tuned. $C$ controls the softness distribution between different tasks, and $m$ is the weight update window size. In order to find the best $C$, we vary $C$ from $0.2$ to $2$, and compare the prediction tasks' PR-AUC scores on different data set under different ranks. Figure \ref{fig:Tchoose} shows the MIMIC-EXTRACT dataset result when rank $=50$ and $m$ is fixed to $5$. The dashed line is the PR-AUC when all the task have equal weight. $C$ is set to $\frac{1}{\sqrt{N}}$ in the remaining experiments to achieve the best result.

We vary the weight update window size $m$ from $1$ to $10$, and compare the convergence speed and PR-AUC score. We fix $C = \frac{1}{\sqrt{N}}$, and plot the tensor loss in each epoch and set the maximum number of epochs to be 200. 
When $m=1$, it does not converge after 200 epochs. When $m=5$, it requires the least number of epochs to converge (when the total loss plateaus). Although when $m=3$, some prediction tasks' PR-AUC scores are slightly better than $m=5$, it requires too many epochs to converge. Thus, in our experiments below, we adopt $m=5$.

\begin{table}
\centering
\resizebox{0.5\textwidth}{!}{%
\begin{tabular}{l  l l l l l} \toprule 
&\textbf{m=1}&\textbf{m=3}&\textbf{m=5}&\textbf{m=8}&\textbf{m=10}\\ \midrule 
In-hospital mortality prediction task &0.740 & 0.789 & 0.854 & 0.783 & 0.768 \\ 
Readmission prediction task& 0.872 & 0.893 & 0.902 & 0.892 & 0.853\\ 
ICU mortality prediction task&0.626 & 0.638 & 0.635 & 0.583 & 0.571\\ 
Ventilation prediction task&0.600 & 0.605 & 0.603 & 0.591 & 0.587 \\ 
Convergence epoch & 200  & 187 & 98  & 110 & 150 \\   \bottomrule
\end{tabular}}
\caption{Experiment result of PR-AUC and convergence epochs when $m$ varies }\label{tbl:temp}
\end{table}

\subsection{Experiment Result}
\partitle{Tensor reconstruction quality analysis}
For the following experiments on tensor reconstruction quality, we run each method for 5 different random initializations and report the average $FIT$. In addition, we evaluate model completion performance under different target ranks, $R$, from 10 to 60, and run 200 epochs.

First, we compare MULTIPAR's FIT with the baseline models on two datasets shown in Figure \ref{fig:fit}. MULTIPAR optimizes all prediction tasks and tensor factorization together. SPARTan and COPA first finish the tensor factorization, and then predict the downstream prediction tasks. SinglePAR optimizes the evaluated task and tensor factorization together. As Figure \ref{fig:fitextract} and \ref{fig:fiteICU} shows, MULTIPAR outperforms all baseline methods on all datasets. In particular, MULTIPAR achieves a FIT score of 0.97 and 0.71 on MIMIC-EXTRACT and eICU respectively, a 13\% and 40\% relative improvement when compared to the best baseline model SinglePAR, which shows the strong tensor reconstruction ability of MULTIPAR, thanks to the ``supervision" of the multiple predictive tasks. COPA performs better than SPARTan because it introduces various regularizations on the factor matrices, which can slightly improve the tensor reconstruction ability.

SinglePAR performs better than SPARTan and COPA on most of the ranks but is left behind COPA on large ranks. SinglePAR jointly optimizes prediction task and tensor factorization together. We can see that certain tasks benefit the tensor FIT while others may guide the tensor factorization into a suboptimal direction and degrade the tensor reconstruction quality. Although MULTIPAR model is supervised, thanks to the MTL, it can use all of the available outcomes across the different tasks to learn generalized representations of the data that are useful for tensor reconstruction.

\begin{figure}
    \centering
    \begin{subfigure}{0.3\textwidth}
        \includegraphics[width=\textwidth]{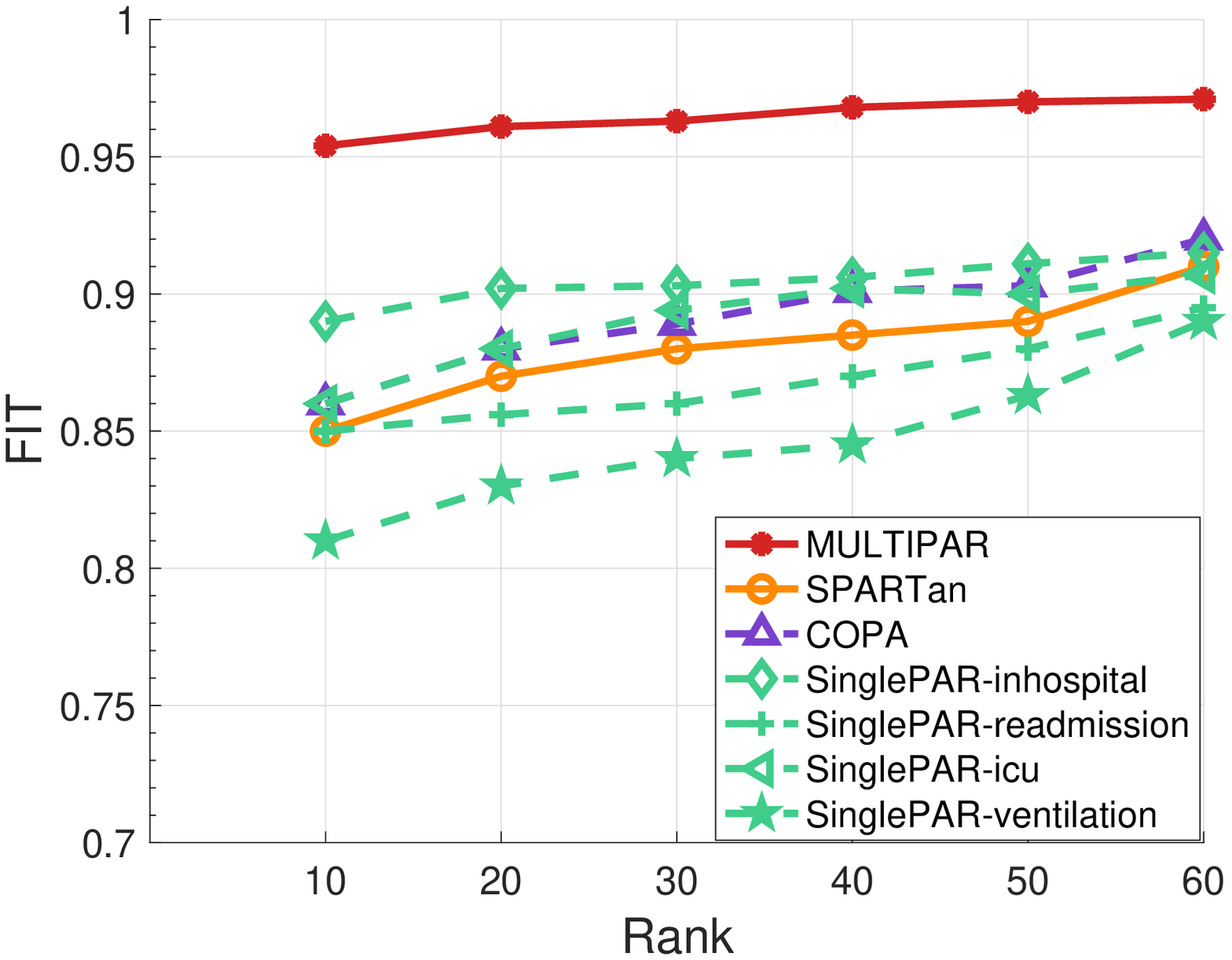}
        \caption{MIMIC-EXTRACT}
        \label{fig:fitextract}
    \end{subfigure}
     \begin{subfigure}{0.3\textwidth}
        \includegraphics[width=\textwidth]{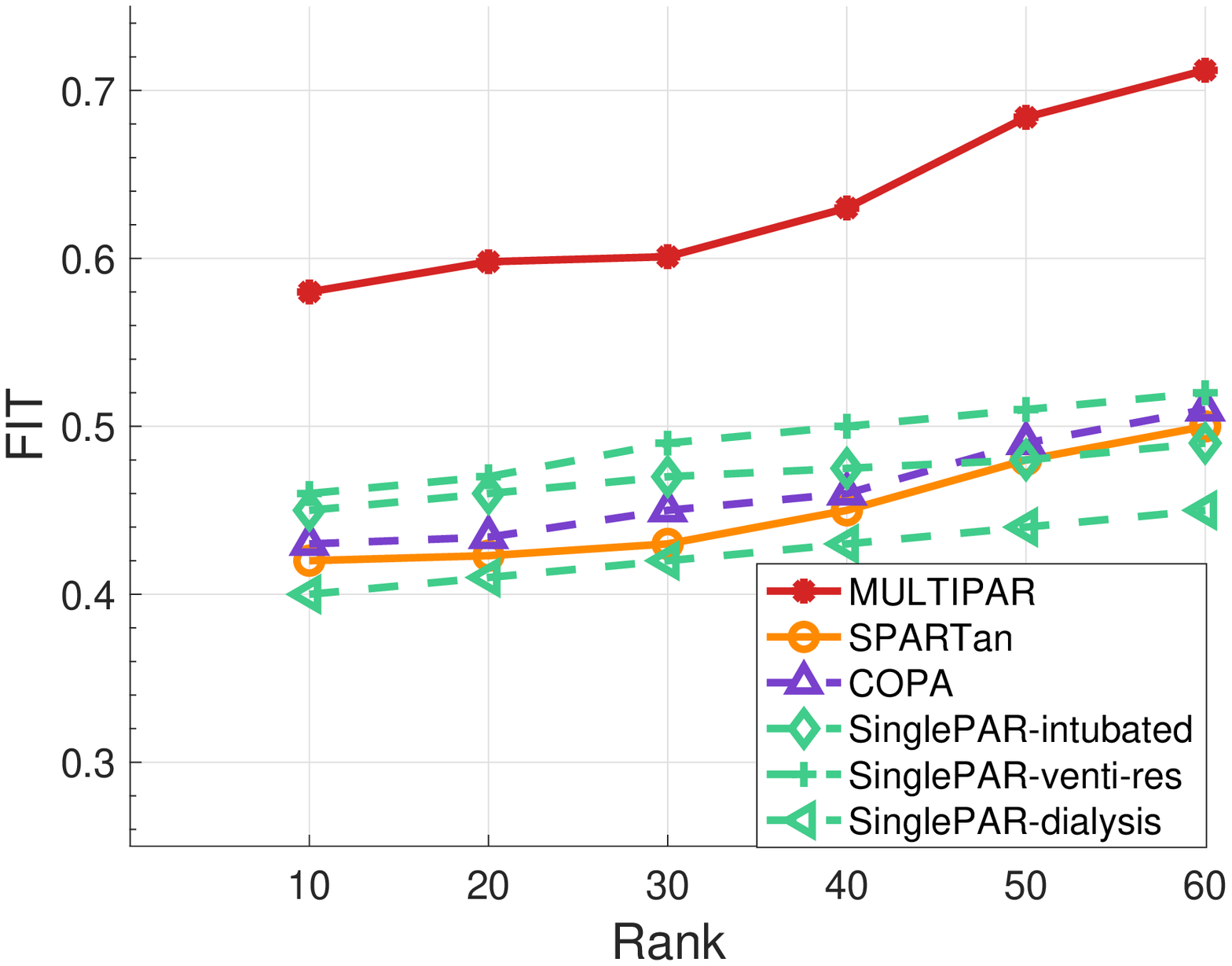}
        \caption{eICU}
        \label{fig:fiteICU}
    \end{subfigure}
     \caption{FIT score on MIMIC-EXTRACT and eICU dataset.}\label{fig:fit}
\end{figure}  

\begin{figure}
    \centering
    \begin{subfigure}{0.3\textwidth}
        \includegraphics[width=\textwidth]{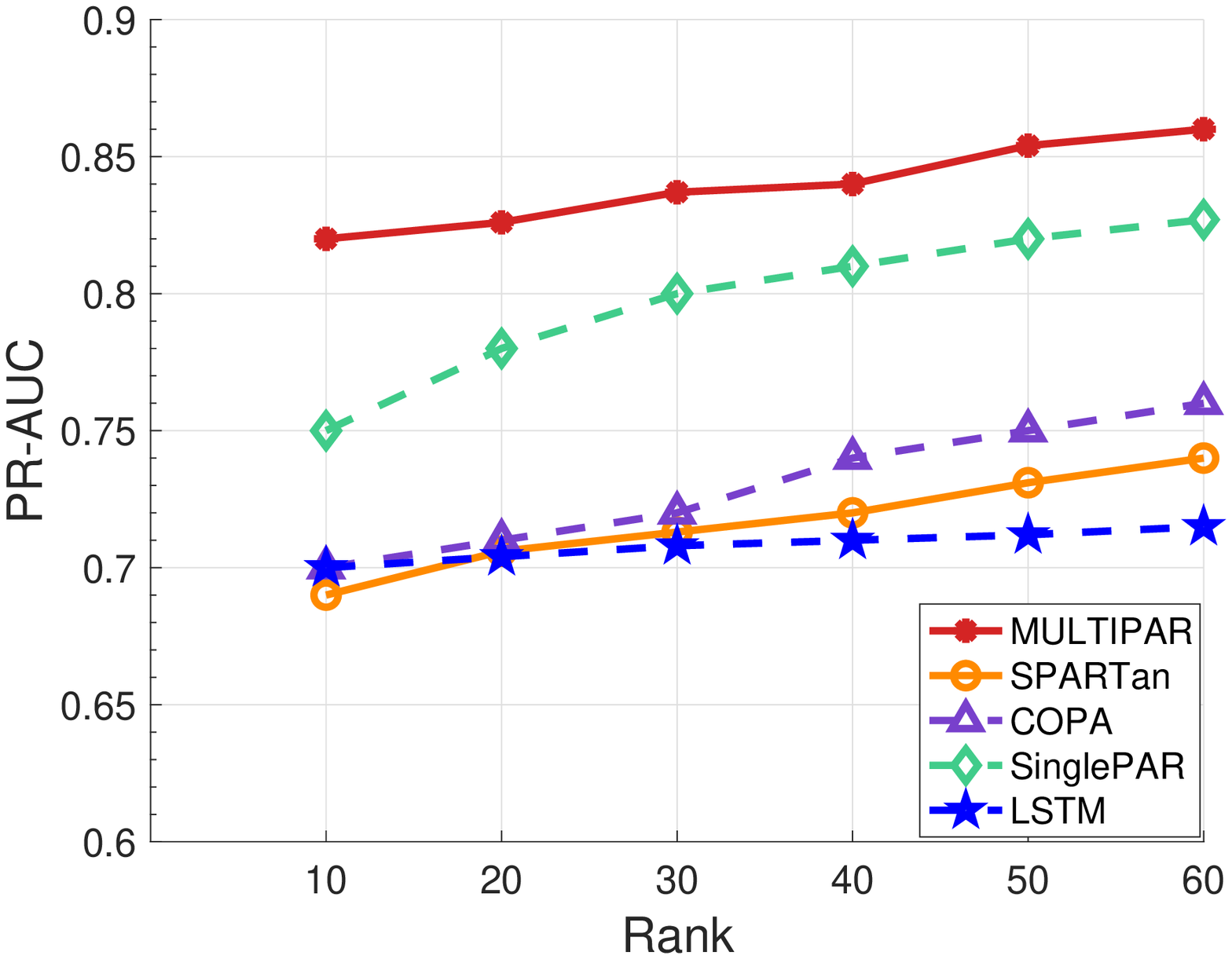}
        \caption{In-hospital Mortality Prediction}
        \label{fig:aucinhos}
    \end{subfigure}
     \begin{subfigure}{0.3\textwidth}
        \includegraphics[width=\textwidth]{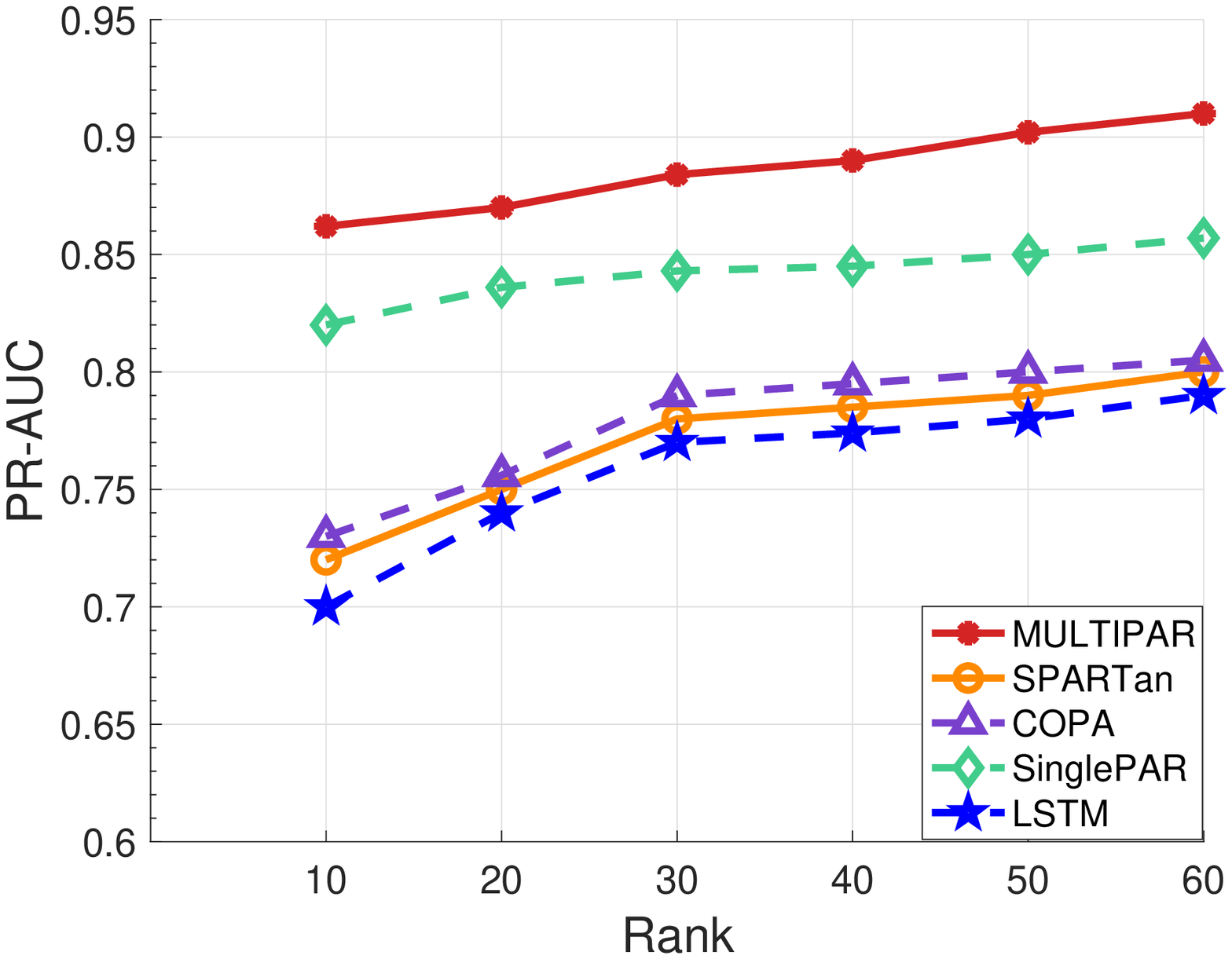}
        \caption{Readmission Prediction}
        \label{fig:aucreadmission}
    \end{subfigure}\\
    \begin{subfigure}{0.3\textwidth}
        \includegraphics[width=\textwidth]{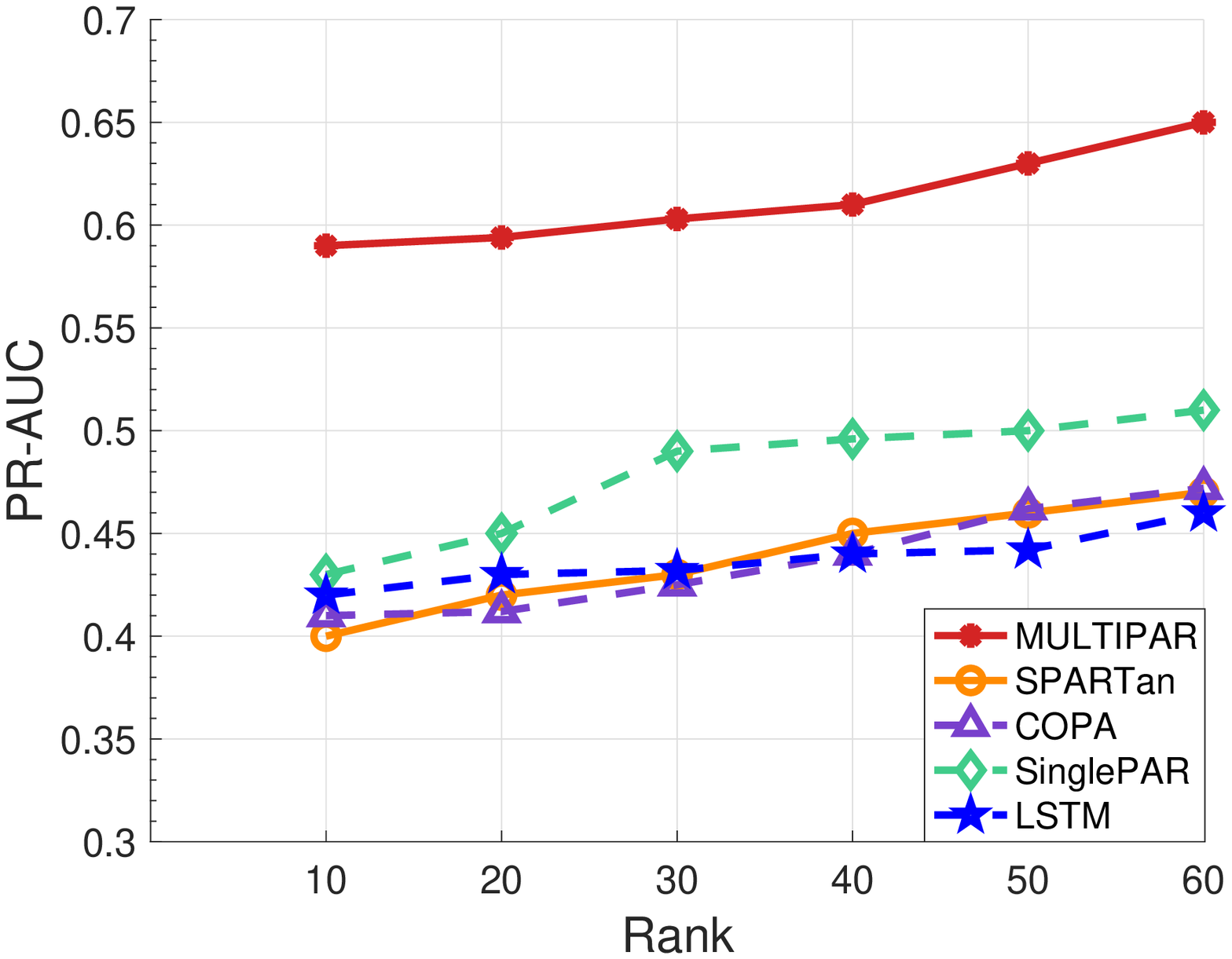}
        \caption{ICU Mortality Prediction}
        \label{fig:aucicu}
    \end{subfigure}
        \begin{subfigure}{0.3\textwidth}
        \includegraphics[width=\textwidth]{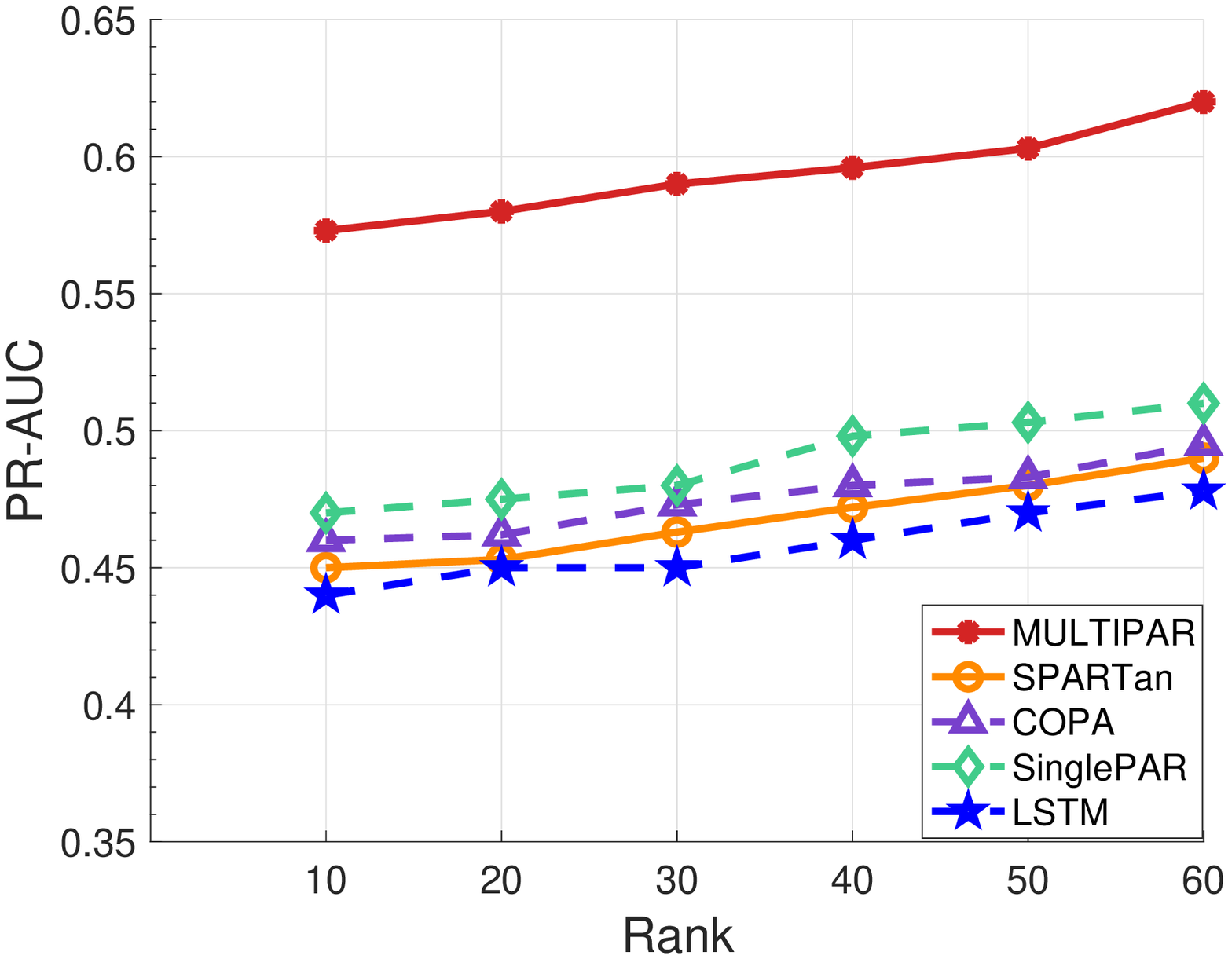}
        \caption{Ventilation Prediction}
        \label{fig:aucventi}
    \end{subfigure}
        \caption{PR-AUC for prediction tasks on MIMIC-EXTRACT}\label{fig:aucEXTRACT}
\end{figure}

\begin{figure}
    \centering
    \begin{subfigure}{0.3\textwidth}
        \includegraphics[width=\textwidth]{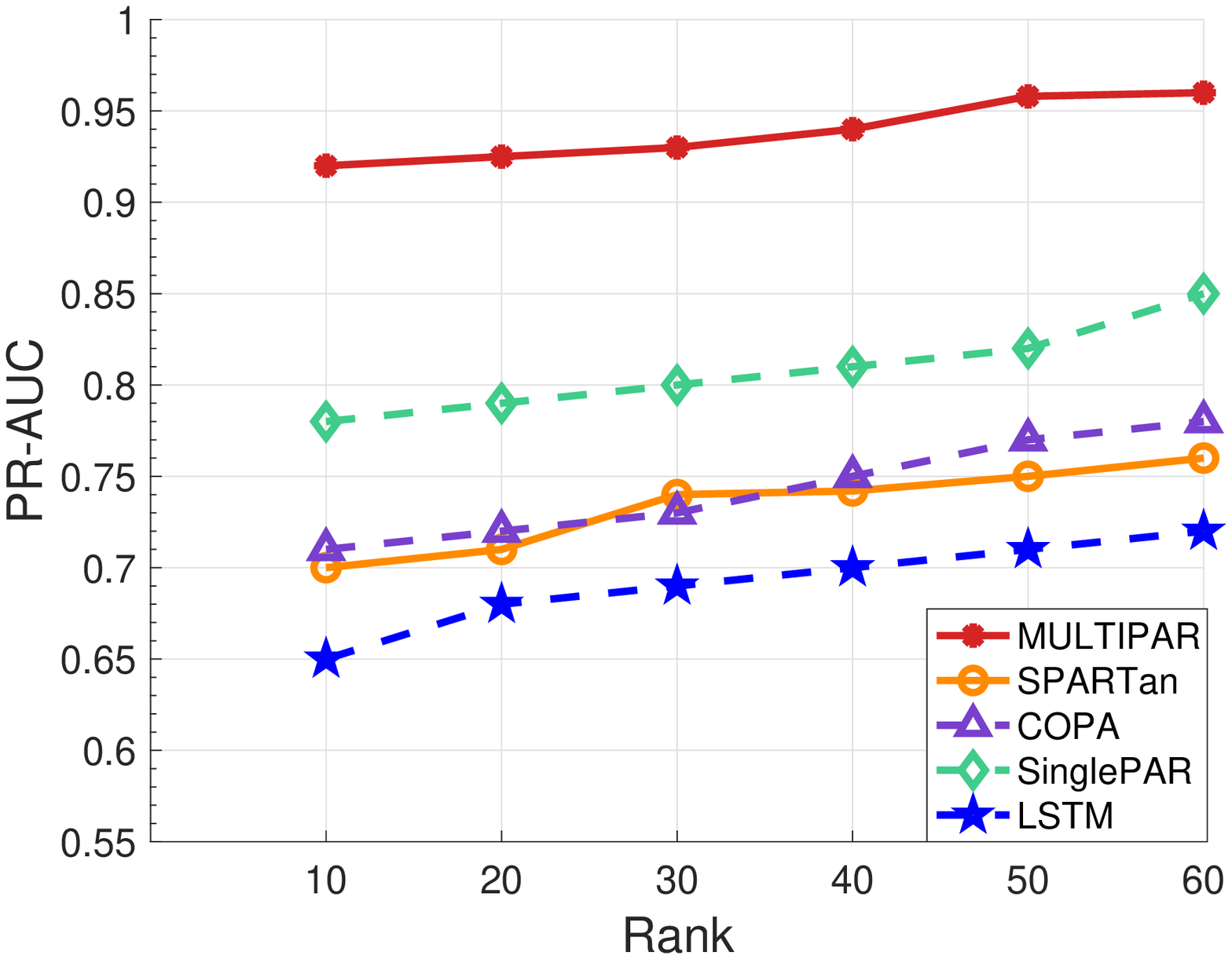}
        \caption{Dialysis Prediction Task}
        \label{fig:eicudialysisauc}
    \end{subfigure}
        \begin{subfigure}{0.3\textwidth}
        \includegraphics[width=\textwidth]{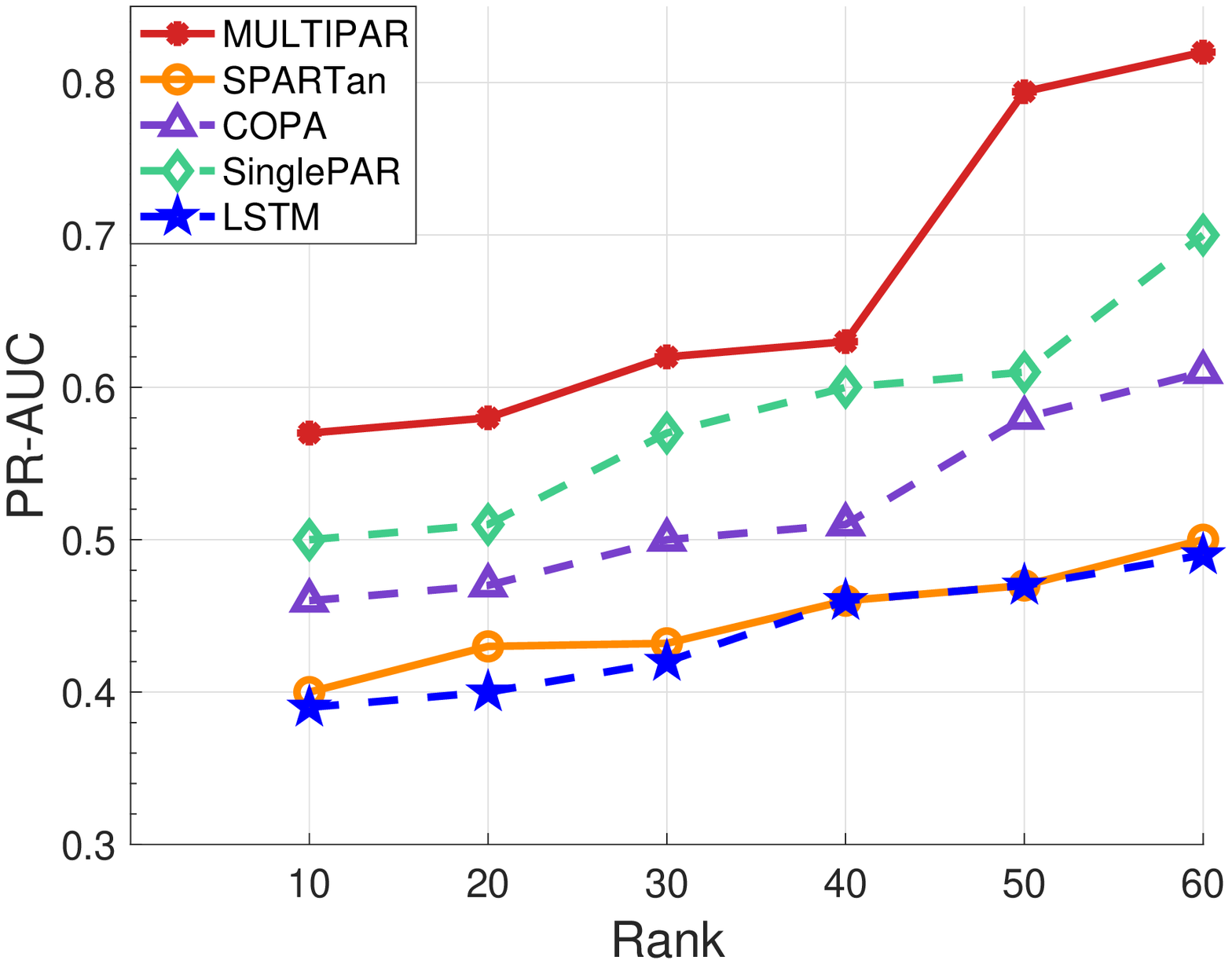}
        \caption{Venti-res Prediction Task}
        \label{fig:eicuventiauc}
    \end{subfigure}
        \begin{subfigure}{0.3\textwidth}
        \includegraphics[width=\textwidth]{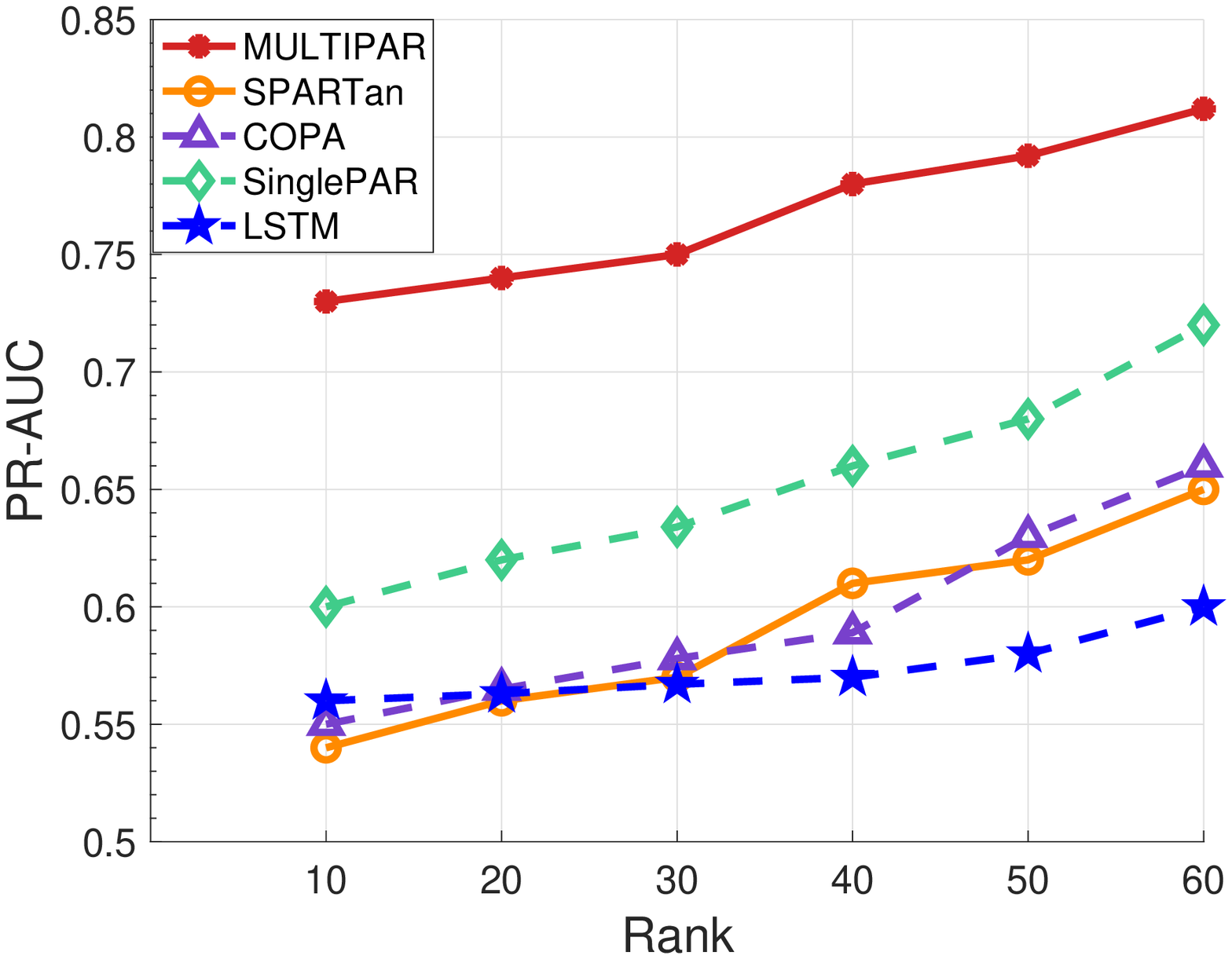}
        \caption{Intubated Prediction Task}
        \label{fig:eicuintubbatedauc}
    \end{subfigure}

    \caption{PR-AUC for prediction tasks on eICU dataset}\label{fig:auceICU}
\end{figure}




\partitle{Predictability analysis}
A logistic regression model is trained on the patient importance membership matrix $\S_k$ for static outcome prediction tasks and an LSTM model is trained on the temporal evolution matrix $\U_k$ for dynamic outcome prediction task. LSTM is a variant of the recurrent neural network (RNN) that mitigates the gradient vanishing problem in traditional RNNs.

In the MIMIC-EXTRACT dataset, only the ventilation prediction task is a dynamic task, and all the tasks in the eICU dataset are static tasks. In order to illustrate the benefit of using the latent factors as features for a downstream prediction model, we also include an LSTM model trained using the original EHR data. The reason why we choose LSTM model is because the original EHR data contains different length patients' visit data, and LSTM model can handle the varying size input temporal data. The input to the LSTM model is an irregular tensor which contains $k$ different patients, and each patient information $X_k$ consists of $I_K$ visits and $J$ medical features. The output is the prediction label for the different patients and different visit for the dynamic task.

We evaluate the prediction accuracy as a function of the tensor factorization rank. As shown in Figures \ref{fig:aucEXTRACT} and \ref{fig:auceICU}, MULTIPAR outperforms the other methods. In Figure \ref{fig:aucEXTRACT}, when the rank is 10, MULTIPAR outperforms the best baseline methods SinglePAR by 17\%, 18\%, 20\% and 22\% for each of the tasks respectively. This demonstrates MULTIPAR's strong generalization ability across multiple prediction tasks by leveraging the shared information between different tasks as well as the strong predictive power of the extracted phenotypes. 
Moreover, SinglePAR always outperforms COPA, SPARTan, and LSTM, which shows that the supervised learning framework can improve predictability. The Figure also illustrates the important role PARAFAC2 plays as the non-tensor based LSTM model performs the worst because it lacks the ability to filter out noise in the raw EHR.

\begin{figure}
    \centering
    \begin{subfigure}{0.3\textwidth}
        \includegraphics[width=\textwidth]{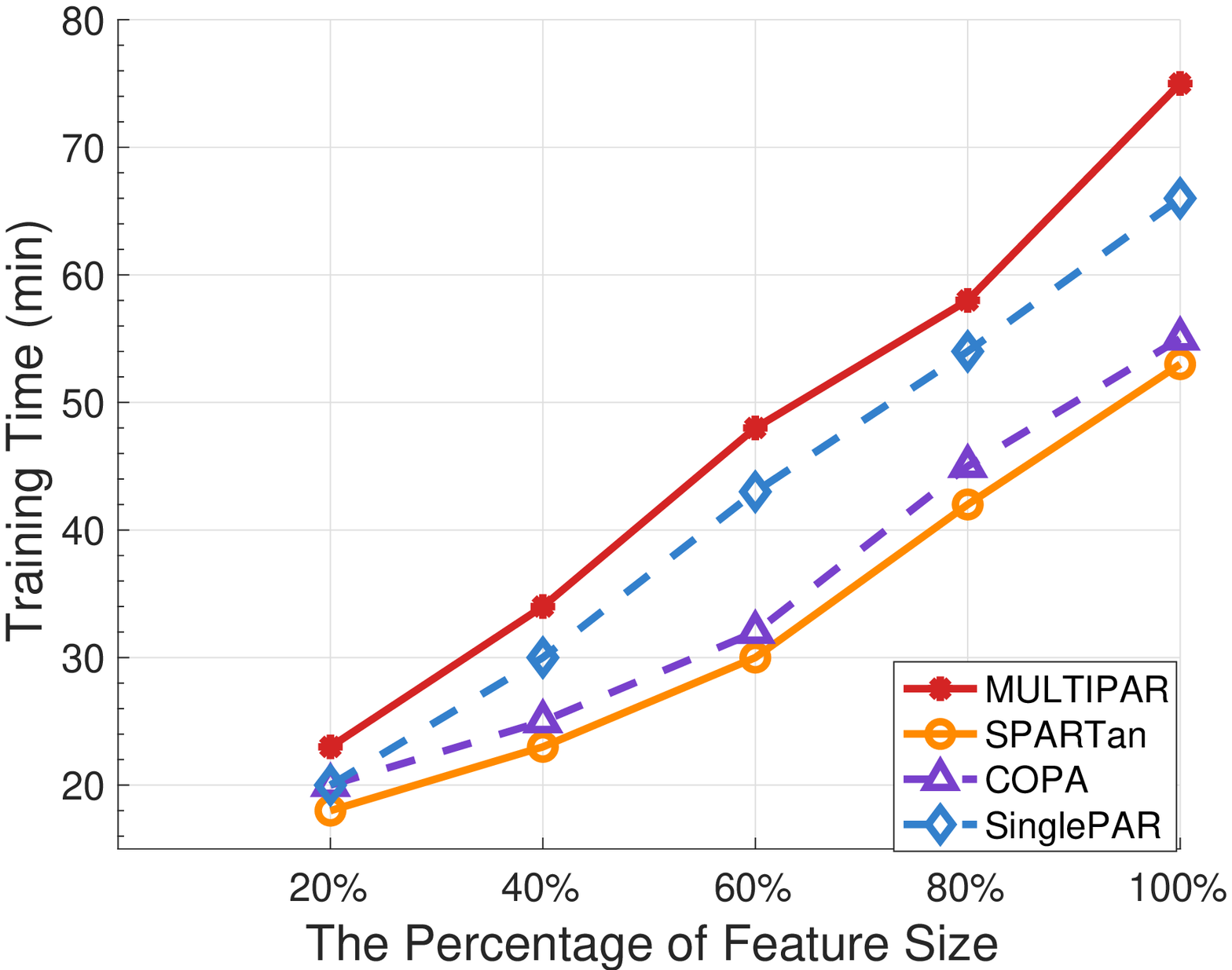}
        \caption{ MIMIC-EXTRACT varying patient size}
        \label{fig:scalaextractpatient}
    \end{subfigure}
     \begin{subfigure}{0.3\textwidth}
        \includegraphics[width=\textwidth]{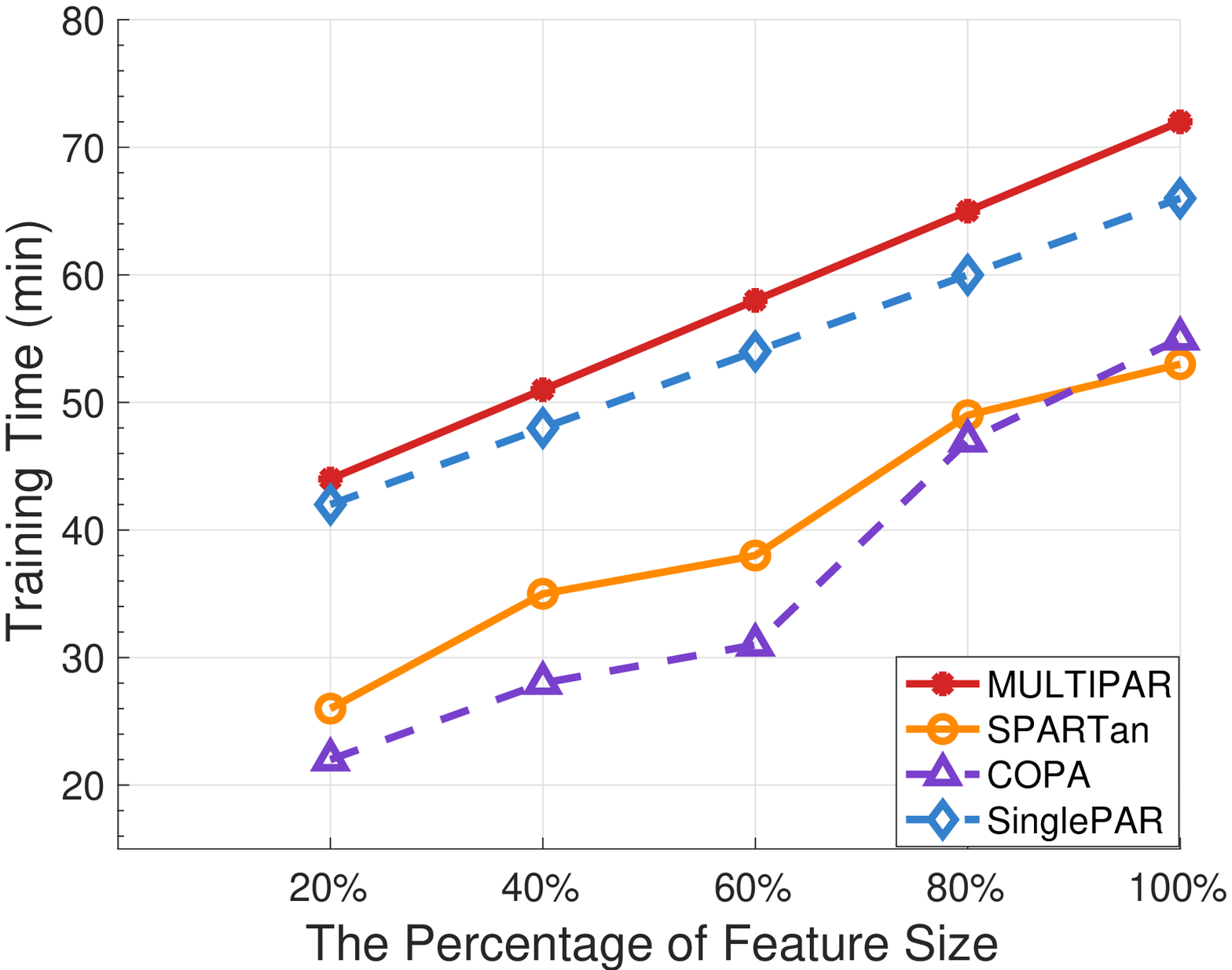}
        \caption{MIMIC-EXTRACT varying feature size}
        \label{fig:scalaextractfeature}
    \end{subfigure}\\
        \begin{subfigure}{0.3\textwidth}
        \includegraphics[width=\textwidth]{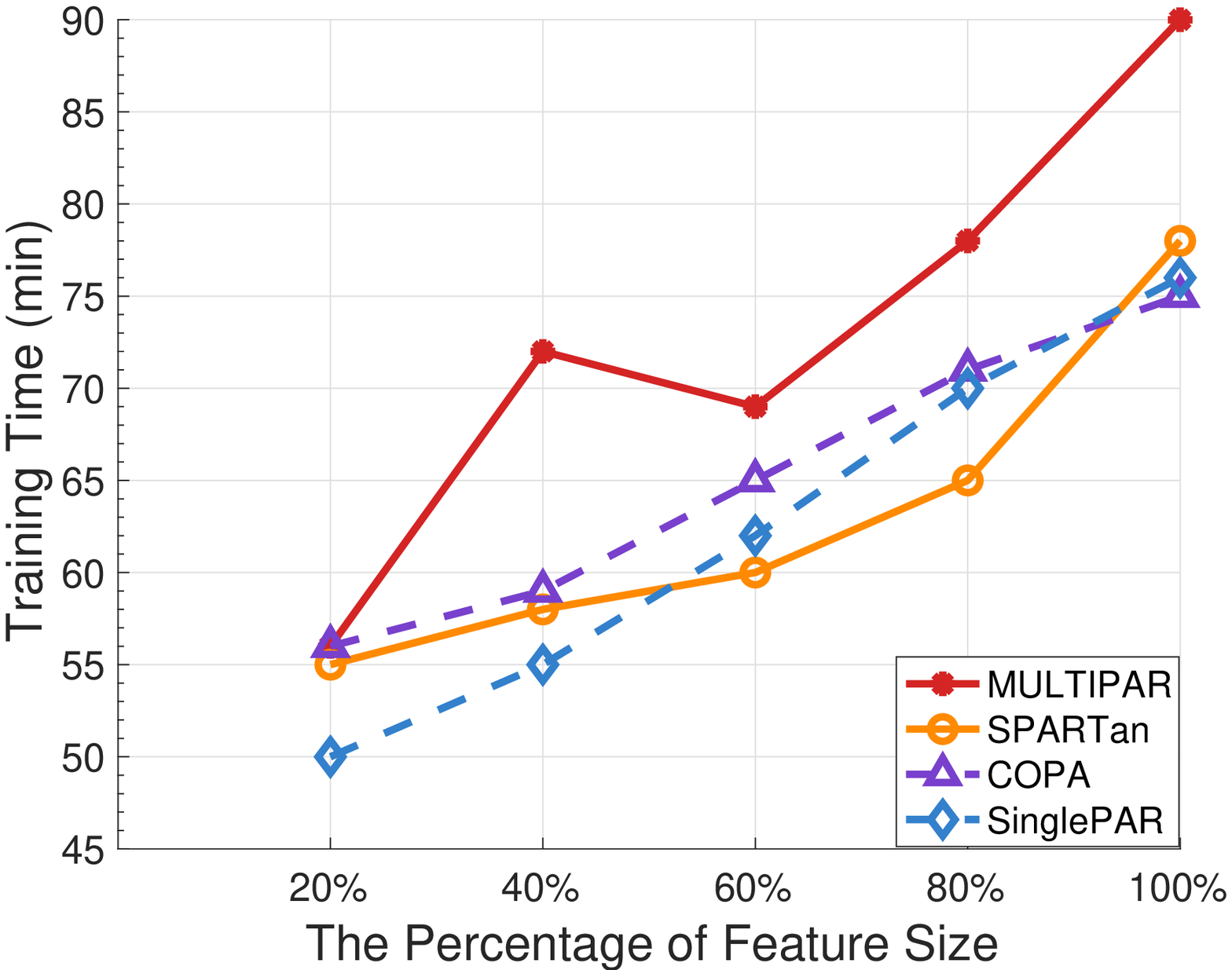}
        \caption{eICU varying patient size}
        \label{fig:scalaeICUpatient}
    \end{subfigure}
        \begin{subfigure}{0.3\textwidth}
        \includegraphics[width=\textwidth]{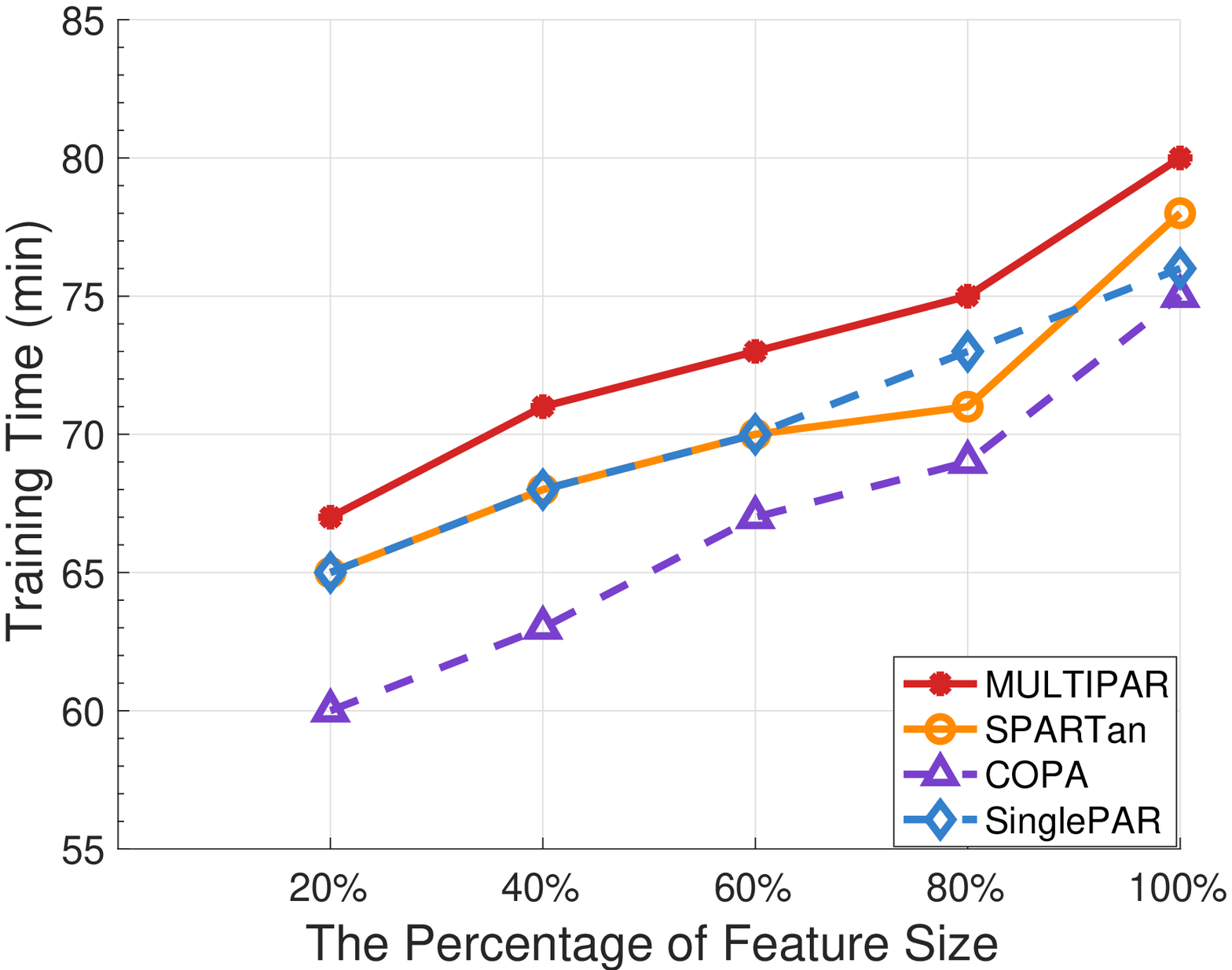}
        \caption{eICU varying feature size}
        \label{fig:scalaeICUfeature}
    \end{subfigure}
        \caption{Training time on MIMIC-EXTRACT and eICU varying patient size and feature size }\label{fig:scala}
\end{figure}

\partitle{Scalability analysis}
 Adding MTL on the PARAFAC2 framework can raise potential scalability issues on large datasets. Therefore, we evaluated the computational time of MULTIPAR compared with the other baseline methods using different data sizes and different feature sizes. We use two Titan RTX GPUs, each GPU has 24 GB of RAM, and train 50 epochs of both methods.
 
 Figure \ref{fig:scala} shows the total training time. COPA, SPARTan, and SinglePAR shows linear scalability as the number of patients and features grows. Although MULTIPAR adds MTL, it still exhibits linear scalability similar to SinglePAR. While MTL adds some additional training time, it is not significantly more as the maximum added time is 8 minutes. Moreover, the experiment result is consistent with our computational complexity analysis, the per-iteration computational complexity is linear with respect to number of patients  $J$ and feature size $K$.
 
\begin{table}
\centering
\resizebox{0.5\textwidth}{!}{%
\begin{tabular}{l c c} \toprule 
\textbf{Phenotype 1 (Normal  vital signs)}& \textbf{Weight}&\textbf{Average Value}\\ \midrule
Oxygen saturation & 1.52 &  98.5\\ 
Systolic blood pressure & 0.91 & 112.7 \\ 
Heart rate & 0.82 &  82.5 \\ 
Mean blood pressure & 0.79 &  81.2 \\ 
Diastolic blood pressure & 0.65 & 76.3  \\ 
Respiratory rate & 0.57 &  18.6  \\ 
Co2 (etco2, pco2, etc.) & 0.43 &  24.2 \\ \hline
\textbf{Phenotype 2 (Abnormal renal and liver function)}& \textbf{Weight}&\textbf{Average Value}\\ \midrule
Alanine aminotransferase & 11.51 &  83.1\\ 
Blood urea nitrogen & 9.64 & 42.3 \\ 
Alkaline phosphate & 8.01 &  153.2 \\ 
Asparate aminotransferase & 5.18 &  90.1 \\ 
Albumin & 3.90 & 3.2 \\ 
Bicarbonate & 2.76 &  17  \\ 
Mean blood pressure & 1.59 &  85  \\\hline
\textbf{Phenotype 3 (Normal  Blood Counts and Serum Electrolytes)}& \textbf{Weight}&\textbf{Average Value}\\ \midrule
Mean corpuscular hemoglobin concentration & 7.54 &  32.1 \\ 
Sodium & 4.93 & 135.2 \\ 
Mean corpuscular hemoglobin & 3.62 &  30.8  \\ 
Mean corpuscular volume & 3.41 &  93.2 \\ 
Chloride & 2.73 & 103 \\ 
Hemoglobin & 1.04 &  12.8  \\ 
Hematocrit & 0.62 &  33.2 \\ \hline
\textbf{Phenotype 4 (Abnormal vital signs)}& \textbf{Weight}&\textbf{Average Value}\\ \midrule
Glascow coma scale total & 2.13 &  6.7\\ 
Oxygen saturation & 1.41 & 85 \\ 
Systolic blood pressure & 1.30 &  153.1\\ 
Temperature & 1.29 &  37.5  \\ 
Heart rate & 1.03 & 115 \\ 
Mean blood pressure & 0.93 &  95  \\ 
Diastolic blood pressure & 0.84 &  82 \\
 \bottomrule
\end{tabular}}
\caption{Phenotypes discovered by MULTIPAR.}\label{tbl:phenoextract}
\end{table}

\begin{table}
\caption{MIMIC-EXTRACT phenotypes discovered by SinglePAR incorporating in-hospital mortality prediction.}
\centering
\resizebox{0.28\textwidth}{!}{%
\begin{tabular}{l l }
\hline
\textbf{Phenotype 1} \\ \hline
{Oxygen saturation } & {Systolic blood pressure}\\
{Heart rate} & {PH }\\
{Mean blood pressure } & {Diastolic blood pressure}\\\hline
\textbf{Phenotype 2} \\ \hline
{Oxygen saturation} & {Systolic blood pressure }\\
{PH} & {Mean blood pressure}\\
{Heart rate } & {Diastolic blood pressure } \\\hline
\textbf{Phenotype 3} \\ \hline
{Temperature  } & {Glascow coma scale total}\\
{Oxygen saturation } & {Systolic blood pressure } \\
{Heart rate} & {Mean blood pressure }\\\hline
\textbf{Phenotype 4} \\ \hline
{Glascow coma scale total } & {Heart rate}\\
{Temperature} & {Systolic blood pressure}\\
{Mean blood pressure} & {PH}\\\hline
\end{tabular}}
\label{tbl:phenoSinglePARinhos}
\end{table}

\begin{table}
\caption{MIMIC-EXTRACT phenotypes discovered by incorporating icu mortality prediction.}
\centering
\resizebox{0.28\textwidth}{!}{%
\begin{tabular}{l l }
\hline
\textbf{Phenotype 1} \\ \hline
{Oxygen saturation } & {Systolic blood pressure}\\
{Heart rate} & {respiratory rate }\\
{Mean blood pressure } & {Diastolic blood pressure}\\\hline
\textbf{Phenotype 2} \\ \hline
{Hemoglobin} & {PH }\\
{Sodium} & {chloride}\\
{Mean corpuscular volume} & {Co2 (etco2, pco2, etc.) } \\\hline
\textbf{Phenotype 3} \\ \hline
{Oxygen saturation} & {Systolic blood pressure}\\
{Heart rate } & {Respiratory rate} \\
{Mean blood pressure} & {Diastolic blood pressure }\\\hline
\textbf{Phenotype 4} \\ \hline
{Temperature } & {Glascow coma scale total}\\
{Oxygen saturation} & {Systolic blood pressure}\\
{Heart rate} & {Mean blood pressure}\\\hline
\end{tabular}}
\label{tbl:phenoicu}
\end{table}

\begin{table}
\caption{MIMIC-EXTRACT phenotypes discovered by SinglePAR incorporating readmission prediction.}
\centering
\resizebox{0.28\textwidth}{!}{%
\begin{tabular}{l l }
\hline
\textbf{Phenotype 1} \\ \hline
{Temperature } & {Glascow coma scale total}\\
{Oxygen saturation} & {Systolic blood pressure}\\
{Heart rate} & {Mean blood pressure}\\\hline
\textbf{Phenotype 2} \\ \hline
{Oxygen saturation} & {Systolic blood pressure }\\
{Heart rate} & {Mean blood pressure}\\
{Diastolic blood pressure } & {Respiratory rate } \\\hline
\textbf{Phenotype 3} \\ \hline
{Sodium } & {Chloride}\\
{Oxygen saturation } & {PH } \\
{Hemoglobin} & {Hear rate }\\\hline
\textbf{Phenotype 4} \\ \hline
{Oxygen saturation } & {Systolic blood pressure}\\
{Heart rate} & {Mean blood pressure}\\
{Diastolic blood pressure} & {PH}\\\hline
\end{tabular}}
\label{tbl:phenoSinglePARread}
\end{table}

\begin{table}
\caption{MIMIC-EXTRACT phenotypes discovered by SinglePAR incorporating ventilation prediction}
\centering
\resizebox{0.28\textwidth}{!}{%
\begin{tabular}{l l }
\hline
\textbf{Phenotype 1} \\ \hline
{Oxygen saturation } & {Systolic blood pressure}\\
{Heart rate} & {Respiratory rate }\\
{Mean blood pressure } & {Diastolic blood pressure}\\\hline
\textbf{Phenotype 2} \\ \hline
{Respiratory rate} & {Sodium }\\
{Temperature} & {Mean corpuscular volume}\\
{Chloride } & {PH } \\\hline
\textbf{Phenotype 3} \\ \hline
{Oxygen saturation } & {Systolic blood pressure}\\
{Heart rate } & {Mean blood pressure } \\
{Diastolic blood pressure} & {Respiratory rate }\\\hline
\textbf{Phenotype 4} \\ \hline
{Temperature } & {Glascow coma scale total}\\
{Oxygen saturation} & {Diastolic blood pressure}\\
{Heart rate} & {Mean blood pressure}\\\hline
\end{tabular}}
\label{tbl:phenoSinglePARventi}
\end{table}

\begin{table}
\caption{MIMIC-EXTRACT phenotypes discovered by COPA.}
\centering
\resizebox{0.35\textwidth}{!}{%
\begin{tabular}{l l }
\hline
\textbf{Phenotype 1} \\ \hline
{Mean corpuscular hemoglobin concentration } & {Sodium}\\
{Mean corpuscular hemoglobin} & {Oxygen saturation } \\
{Mean corpuscular volume } & {Chloride }\\\hline
\textbf{Phenotype 2} \\ \hline
{Temperature} & {Oxygen saturation } \\
{Systolic blood pressure} & {Heart rate }\\
{Mean blood pressure } & {Diastolic blood pressure}\\\hline
\textbf{Phenotype 3} \\ \hline
{Glascow coma scale total } & {Temperature}\\
{Oxygen saturation } & {Systolic blood pressure }\\
{Heart rate} & {Mean blood pressure}\\\hline
\textbf{Phenotype 4} \\ \hline
{Oxygen saturation} & {Systolic blood pressure}\\
{Mean blood pressure} & {Diastolic blood pressure }\\
{Heart rate} & {Respiratory rate}\\ \hline
\end{tabular}}
\label{tbl:phenocopa}
\end{table}

\begin{figure}
    \centering
    \begin{subfigure}{0.3\textwidth}
        \includegraphics[width=\textwidth]{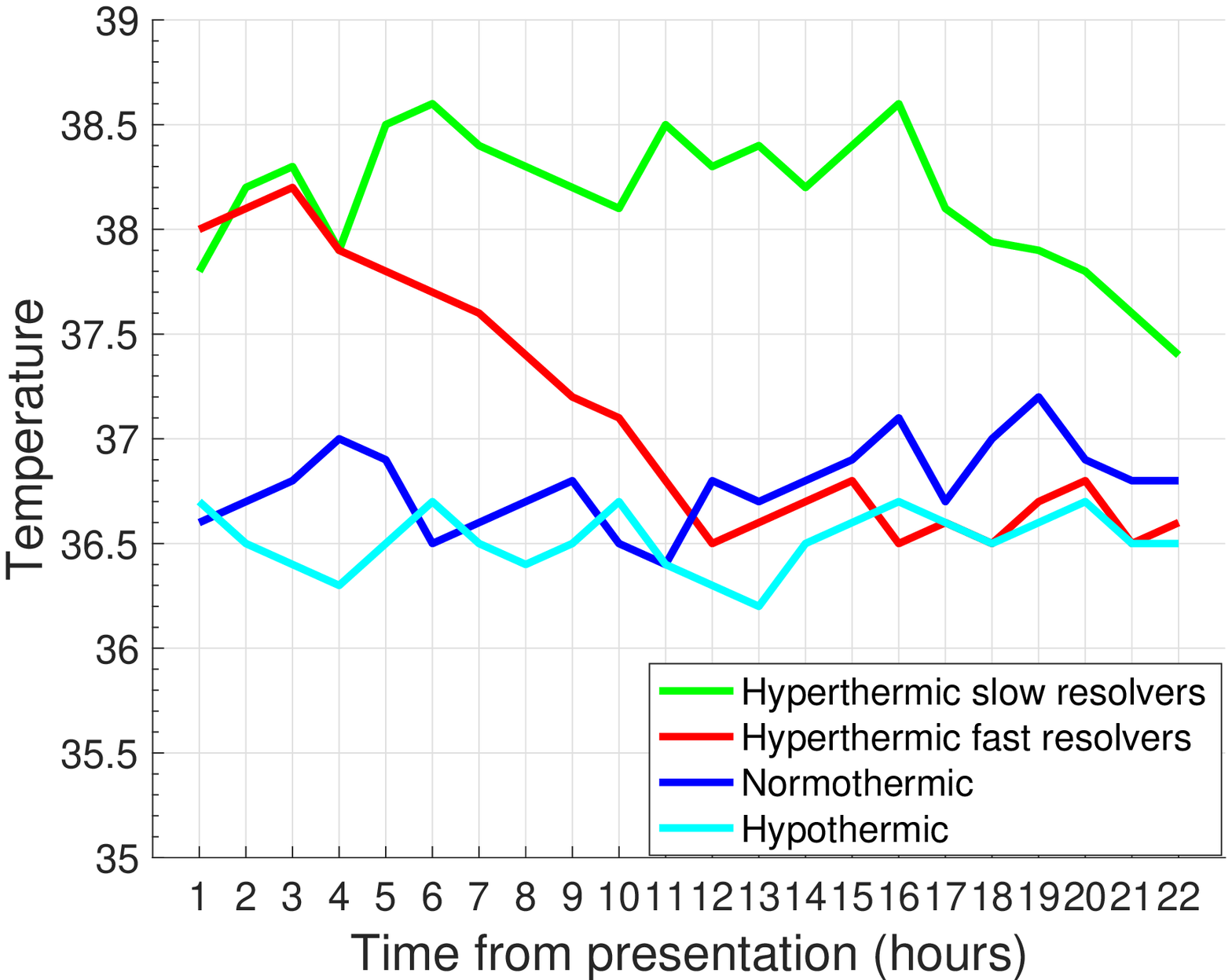}
        \caption{Temperature}
        \label{fig:temperature}
    \end{subfigure}
     \begin{subfigure}{0.3\textwidth}
        \includegraphics[width=\textwidth]{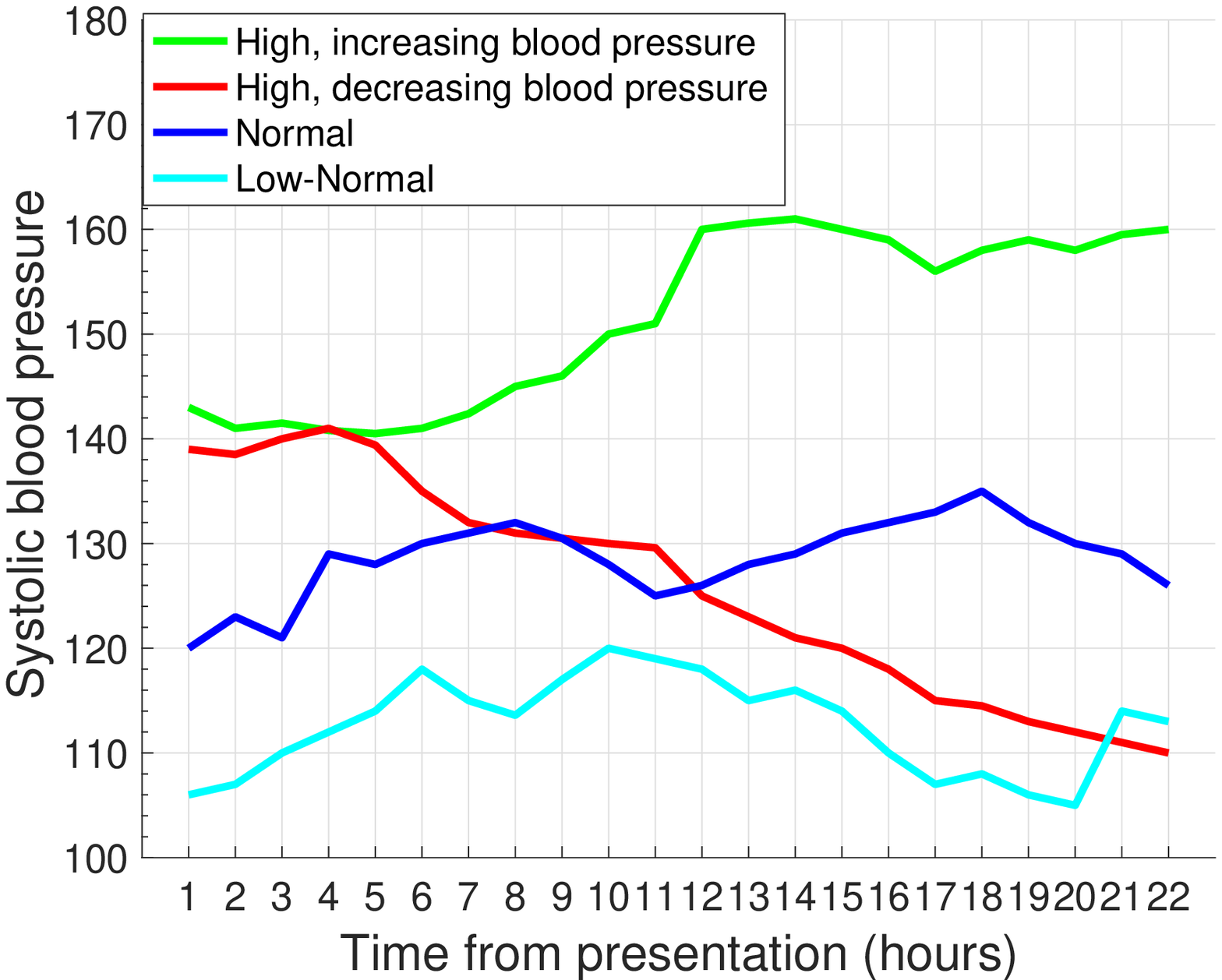}
        \caption{Blood pressure}
        \label{fig:systolic blood pressure}
    \end{subfigure}
        \caption{Temporal Trajectory}\label{fig:temporal}
\end{figure}

\partitle{Interpretability analysis}
Finally, we did an interpretability analysis of MULTIPAR on the MIMIC-EXTRACT dataset. We first illustrate the phenotypes discovered by MULTIPAR in Table \ref{tbl:phenoextract}. We set rank to $4$, and use the $\V$ matrix to select the most important vital signs in each phenotype based on the weight. $\V$ matrix represents the membership of medical features in each phenotype, and the “weight” column in Table \ref{tbl:phenoextract} is the weight in the $V$ matrix. We then use the $\S$ matrix to find the patient subgroup of each phenotype, and calculate the average value of the vital signs shown in the “Average value” column. It is important to note that there is no post-processing in these extracted phenotypes. A critical care expert reviewed and endorsed the presented phenotypes which suggest collective characteristics such as normal vital signs, abnormal renal and liver function, normal blood counts and serum electroytes, and abnormal vital signs. 


The phenotypes discovered by the supervised single task model SinglePAR strongly overlap with each other as shown in Tables \ref{tbl:phenoSinglePARinhos}, \ref{tbl:phenoicu}, \ref{tbl:phenoSinglePARread}, and \ref{tbl:phenoSinglePARventi}. Since we are incorporating in-hospital mortality prediction task in Table \ref{tbl:phenoSinglePARinhos}, most of the phenotypes discovered by SinglePAR are abnormal in vital signs. COPA discovered phenotypes shown in Table \ref{tbl:phenocopa} contain more information compared to SinglePAR, which makes sense because a supervised model may guide the tensor factorization to a specific direction geared toward the task and cause information loss. However, MULTIPAR does not have information loss compared to COPA, it even provides a new phenotype (phenotype 2: abnormal in renal and liver function) which is not discovered by COPA. This verifies the benefit of MTL in MULTIPAR, which can leverage information from multiple tasks to avoid local optimum. 

We then test MULTIPAR's ability to find meaningful subgroup temporal trajectories, which can help clinical care experts make precise prescriptions and treatments for specific subgroup of patients. We select the patients with the number of observations equal to $22$ for visualization purposes. We select the four phenotypes for the temperature feature and systolic blood pressure feature, then use the $\S$ matrix to find the patient subgroup for each phenotype, and print the average value trajectory.  

From Figure \ref{fig:temporal}, we can see that the four patient subgroups (clusters) exhibit very different temporal trajectories in the temperature. Our clinical expert interpreted that the green line suggests a hyperthermic slow resolver patient subgroup  which exhibits a slow decreasing trend as time increases, the red line suggests a hyperthermic fast resolver patient subgroup, which exhibits a fast decreasing trend as time increases, the dark blue line suggests a normothermic patient subgroup  and the light blue line is a hypothermic patient subgroup. For the systolic blood pressure trajectory shown in Figure \ref{fig:systolic blood pressure}, the green subgroup has high, increasing blood pressure, the red subgroup has high, decreasing blood pressure, the dark blue and light blue subgroups have consistently normal  and low-normal blood pressure, respectively. 

\section{Conclusion}
PARAFAC2 irregular tensor factorization has been studied to successfully extract meaningful medical concepts (phenotypes). Despite recent advancements, the predictability and interpretability is not satisfactory. We proposed MULTIPAR: a supervised irregular tensor factorization with multi-task learning to jointly optimize the tensor factorization and multiple downstream prediction tasks. It is built on three major contributions: a supervised framework for PARAFAC2 tensor factorization and downstream prediction tasks; a new multi-task learning  method combining tensor factorization and multiple prediction models; and a novel weight selection method for supervised multi-task optimization. We conducted extensive experiments to demonstrate that MULTIPAR can extract more meaningful phenotypes with higher predictability compared to state-of-the-art methods in a scalable way. In the future, we plan to incorporate low-rankness constraints for robustness against missing data, and incorporate more sophisticated regularization constraints to capture the complex and non-linear temporal relationships.

\bibliographystyle{unsrt}  
\bibliography{MULTIPAR}  

\begin{thebibliography}{10}

\bibitem{MetaFac2009}
Yu-Ru Lin, J.~Sun, Paul Castro, Ravi Konuru, Hari Sundaram, and Aisling
  Kelliher.
\newblock Metafac: Community discovery via relational hypergraph factorization
  with propinquity dynamics.
\newblock In {\em KDD}, pages 527--536, 2009.

\bibitem{link2011}
Stephan Spiegel, Jan Clausen, Sahin Albayrak, and Jérôme Kunegis.
\newblock Link prediction on evolving data using tensor factorization.
\newblock In {\em 2009 ICDM Workshops}, pages 100--110, 05 2011.

\bibitem{rubik2015}
Yichen Wang, Robert Chen, Joydeep Ghosh, Joshua~C. Denny, Abel~N. Kho, You
  Chen, Bradley~A. Malin, and Jimeng Sun.
\newblock Rubik: Knowledge guided tensor factorization and completion for
  health data analytics.
\newblock In {\em KDD}, pages 1265--1274, 2015.

\bibitem{GigaTensor}
U.~Kang, Evangelos Papalexakis, Abhay Harpale, and Christos Faloutsos.
\newblock Gigatensor: Scaling tensor analysis up by 100 times - algorithms and
  discoveries.
\newblock {\em Proceedings of the ACM SIGKDD International Conference on
  Knowledge Discovery and Data Mining}, 08 2012.

\bibitem{CoSTCo}
Hanpeng Liu, Yaguang Li, Michael Tsang, and Yan Liu.
\newblock Costco: A neural tensor completion model for sparse tensors.
\newblock pages 324--334, 07 2019.

\bibitem{taste2019}
Ardavan Afshar, Ioakeim Perros, Joyce Ho, J.~Sun, Walter Stewart, Haesun Park,
  Christopher DeFilippi, and Sherry Yan.
\newblock Taste: Temporal and static tensor factorization for phenotyping
  electronic health records.
\newblock 11 2019.

\bibitem{repair2020}
Yifei Ren, Jian Lou, Li~Xiong, and Joyce Ho.
\newblock Robust irregular tensor factorization and completion for temporal
  health data analysis.
\newblock In {\em CIKM}, pages 1295--1304, 2020.

\bibitem{logpar2020}
Kejing Yin, Ardavan Afshar, Joyce Ho, Kwok-Wai Cheung, Chao Zhang, and Jimeng
  Sun.
\newblock Logpar: Logistic parafac2 factorization for temporal binary data with
  missing values.
\newblock In {\em KDD}, pages 1625--1635, 08 2020.

\bibitem{afshar2018copa}
Ardavan Afshar, Ioakeim Perros, Evangelos~E Papalexakis, Elizabeth Searles,
  Joyce Ho, and Jimeng Sun.
\newblock Copa: Constrained parafac2 for sparse \& large datasets.
\newblock In {\em CIKM}, 2018.

\bibitem{recommender2010}
Alexandros Karatzoglou, Xavier Amatriain, Linas Baltrunas, and Nuria Oliver.
\newblock Multiverse recommendation: N-dimensional tensor factorization for
  context-aware collaborative filtering.
\newblock pages 79--86, 01 2010.

\bibitem{signal2016}
N.D. Sidiropoulos, Lieven Lathauwer, Xiao Fu, Kejun Huang, Evangelos
  Papalexakis, and Christos Faloutsos.
\newblock Tensor decomposition for signal processing and machine learning.
\newblock {\em IEEE Transactions on Signal Processing}, PP, 07 2016.

\bibitem{Carroll1970AnalysisOI}
J.~Carroll and J.~Chang.
\newblock Analysis of individual differences in multidimensional scaling via an
  n-way generalization of “eckart-young” decomposition.
\newblock {\em Psychometrika}, 35:283--319, 1970.

\bibitem{Harshman1970FoundationsOT}
R.~Harshman.
\newblock Foundations of the parafac procedure: Models and conditions for an
  "explanatory" multi-model factor analysis.
\newblock 1970.

\bibitem{tucker}
Conrad Snyder, Henry Law, and Peter Pamment.
\newblock Calculation of tucker’s three-mode common factor analysis.
\newblock {\em Behavior Research Methods and Instrumentation}, 11:609--611, 11
  1979.

\bibitem{thirdKilmer}
Misha Kilmer, Karen Braman, Ning Hao, and Randy Hoover.
\newblock Third-order tensors as operators on matrices: A theoretical and
  computational framework with applications in imaging.
\newblock {\em SIAM Journal on Matrix Analysis and Applications}, 34, 01 2013.

\bibitem{factorization2011}
Misha Kilmer and Carla Martin.
\newblock Factorization strategies for third-order tensors.
\newblock {\em Linear Algebra and Its Applications - LINEAR ALGEBRA APPL}, 435,
  08 2011.

\bibitem{Harshman1972Parafac2MA}
R.~Harshman.
\newblock Parafac2: Mathematical and technical notes.
\newblock 1972.

\bibitem{PARAFAC2part1}
Henk Kiers, Jos Berge, and Rasmus Bro.
\newblock Parafac2—part i. a direct fitting algorithm for the parafac2 model.
\newblock {\em Journal of Chemometrics}, 13:275--294, 05 1999.

\bibitem{miningyadav2017}
Pranjul Yadav, Michael~S. Steinbach, Vipin Kumar, and Gy{\"{o}}rgy~J. Simon.
\newblock Mining electronic health records: {A} survey.
\newblock {\em CoRR}, abs/1702.03222, 2017.

\bibitem{marble2014}
Joyce Ho, Joydeep Ghosh, and J.~Sun.
\newblock Marble: High-throughput phenotyping from electronic health records
  via sparse nonnegative tensor factorization.
\newblock {\em Proceedings of the ACM SIGKDD International Conference on
  Knowledge Discovery and Data Mining}, 08 2014.

\bibitem{limestone2014}
Joyce Ho, Joydeep Ghosh, Steve Steinhubl, Walter Stewart, Joshua Denny, Bradley
  Malin, and J.~Sun.
\newblock Limestone: High-throughput candidate phenotype generation via tensor
  factorization.
\newblock {\em Journal of Biomedical Informatics}, 52, 12 2014.

\bibitem{mimicextract}
Shirly Wang, Matthew B.~A. McDermott, Geeticka Chauhan, Michael~C. Hughes,
  Tristan Naumann, and Marzyeh Ghassemi.
\newblock Mimic-extract: {A} data extraction, preprocessing, and representation
  pipeline for {MIMIC-III}.
\newblock In {\em ACM CHIL}, page 222–235, 2020.

\bibitem{perros2017spartan}
Ioakeim Perros, Evangelos~E Papalexakis, Fei Wang, Richard Vuduc, Elizabeth
  Searles, Michael Thompson, and Jimeng Sun.
\newblock Spartan: Scalable parafac2 for large \& sparse data.
\newblock In {\em KDD}, 2017.

\bibitem{Xiaoqian2017Discriminative}
Yejin Kim, Robert El-Kareh, Jimeng Sun, Hwanjo Yu, and Xiaoqian Jiang.
\newblock Discriminative and distinct phenotyping by constrained tensor
  factorization.
\newblock {\em Scientific Reports}, 7, 12 2017.

\bibitem{jointwang}
Zhangyang Wang, Yingzhen Yang, Shiyu Chang, Jinyan(Leo) Li, and Simon Fong.
\newblock A joint optimization framework of sparse coding and discriminative
  clustering.
\newblock 06 2015.

\bibitem{jointliu}
Xiaofeng Liu, Zhaofeng Li, Lingsheng Kong, Zhihui Diao, Junliang Yan, Yang Zou,
  Chao Yang, Ping Jia, and Jane You.
\newblock A joint optimization framework of low-dimensional projection and
  collaborative representation for discriminative classification.
\newblock pages 1493--1498, 08 2018.

\bibitem{Zhang2018overview}
Yu~Zhang and Qiang Yang.
\newblock An overview of multi-task learning.
\newblock {\em National Science Review}, 5:30--43, 01 2018.

\bibitem{Zhang2017survey}
Yu~Zhang and Qiang Yang.
\newblock A survey on multi-task learning.
\newblock {\em IEEE Transactions on Knowledge and Data Engineering}, PP, 07
  2017.

\bibitem{Baxter2000}
Jonathan Baxter.
\newblock A model of inductive bias learning.
\newblock {\em Journal of Artificial Intelligence Research}, 12, 05 2000.

\bibitem{Thrun1999}
Sebastian Thrun.
\newblock Is learning the n-th thing any easier than learning the first?
\newblock {\em Advances in Neural Information Processing Systems (NIPS)}, 09
  1999.

\bibitem{Collobert2008}
Ronan Collobert and Jason Weston.
\newblock A unified architecture for natural language processing: Deep neural
  networks with multitask learning.
\newblock pages 160--167, 01 2008.

\bibitem{multilabel2009}
Grigorios Tsoumakas and Ioannis Katakis.
\newblock Multi-label classification: An overview.
\newblock {\em International Journal of Data Warehousing and Mining}, 3:1--13,
  09 2009.

\bibitem{Deng2013}
li~Deng, Geoffrey Hinton, and Brian Kingsbury.
\newblock New types of deep neural network learning for speech recognition and
  related applications: An overview.
\newblock pages 8599--8603, 10 2013.

\bibitem{chen2015multitask}
Dongpeng Chen and Brian Mak.
\newblock Multitask learning of deep neural networks for low-resource speech
  recognition.
\newblock {\em IEEE/ACM Transactions on Audio, Speech, and Language
  Processing}, 23:1--1, 07 2015.

\bibitem{Girshick2015FastR}
Ross~B. Girshick.
\newblock Fast r-cnn.
\newblock {\em 2015 IEEE International Conference on Computer Vision (ICCV)},
  pages 1440--1448, 2015.

\bibitem{Ramsundar2015drug}
Bharath Ramsundar, Steven Kearnes, Patrick Riley, Dale Webster, David
  Konerding, and Vijay Pande.
\newblock Massively multitask networks for drug discovery.
\newblock {\em arXiv:1502.02072}, 02 2015.

\bibitem{multitask2019}
Hrayr Harutyunyan, Hrant Khachatrian, David Kale, and Aram Galstyan.
\newblock Multitask learning and benchmarking with clinical time series data.
\newblock {\em Scientific Data}, 6, 06 2019.

\bibitem{Hong_2020}
David Hong, Tamara~G. Kolda, and Jed~A. Duersch.
\newblock Generalized canonical polyadic tensor decomposition.
\newblock {\em SIAM Review}, 62(1):133–163, Jan 2020.

\bibitem{AOADMM}
Marie Roald, Carla Schenker, Rasmus Bro, Jérémy Cohen, and Evrim Acar.
\newblock An ao-admm approach to constraining parafac2 on all modes.
\newblock 10 2021.

\bibitem{MTA2018}
Shikun Liu, Edward Johns, and Andrew Davison.
\newblock End-to-end multi-task learning with attention.
\newblock 03 2018.

\bibitem{hinton2015distilling}
Geoffrey Hinton, Oriol Vinyals, and Jeff Dean.
\newblock Distilling the knowledge in a neural network, 2015.

\bibitem{parikh2014proximal}
Neal Parikh and Stephen Boyd.
\newblock Proximal algorithms.
\newblock {\em Foundations and Trends in optimization}, 1(3):127--239, 2014.

\bibitem{xu2015block}
Yangyang Xu and Wotao Yin.
\newblock Block stochastic gradient iteration for convex and nonconvex
  optimization.
\newblock {\em SIAM Journal on Optimization}, 25(3):1686--1716, 2015.

\bibitem{eICU}
Tom Pollard, Alistair Johnson, Jesse Raffa, Leo Celi, Roger Mark, and Omar
  Badawi.
\newblock The eicu collaborative research database, a freely available
  multi-center database for critical care research.
\newblock {\em Scientific Data}, 5:180178, 09 2018.

\bibitem{KimDiscriminative}
Yejin Kim, Robert El-Kareh, Jimeng Sun, Hwanjo Yu, and Xiaoqian Jiang.
\newblock Discriminative and distinct phenotyping by constrained tensor
  factorization.
\newblock {\em Scientific Reports}, 2017.

\bibitem{BroPARA}
Rasmus Bro, Claus~A. Andersson, and Henk A.~L. Kiers.
\newblock Parafac2—part ii. modeling chromatographic data with retention time
  shifts.
\newblock {\em Journal of Chemometrics}, 1999.

\end{thebibliography}

\end{document}